\theoremstyle{plain}
\newtheorem{theorem}{Theorem}
\newtheorem{lemma}[theorem]{Lemma}
\newtheorem{proposition}[theorem]{Proposition}
\newtheorem*{theorem*}{Theorem}
\newtheorem*{lemma*}{Lemma}
\newtheorem*{corollary*}{Corollary}
\newtheorem*{proposition*}{Proposition}
\newtheorem*{claim*}{Claim}
\newtheorem*{fact*}{Fact}
\theoremstyle{definition}
\newtheorem*{definition*}{Definition}
\newtheorem*{remark*}{Remark}
\newtheorem*{example*}{Example}
\newcommand{\ignore}[1]{}
\DeclareMathOperator*{\argmin}{arg\,min}
\newcommand{\be}{\begin{align}}
\newcommand{\en}{\end{align}}
\newcommand{\ben}{\begin{align*}}
\newcommand{\enn}{\end{align*}}
\newcommand{\norm}[1]{\left\|#1\right\|}
\newcommand{\reals}{\mathbb{R}}
\def\reals{{\mathcal R}}
\def\trace{{\bf Tr}}
\def\reals{{\mathbb R}}
\def\mD{{\mathcal D}}
\def\bold0{\mathbf{0}}
\def\be{\mathbf{e}}
\def\trace{{\bf Tr}}
\newcommand{\eps}{\varepsilon}
\newcommand{\sumin}{\sum_{i=1}^n}
\newcommand{\Expect}{\mathbb{E}}
\newcommand{\startfoo}{%
    \par\medskip
    \begin{mdframed}[linewidth=1pt]%
    \let\figure\figurehere
    \let\endfigure\endfigurehere
    \ignorespaces
}
\newcommand{\stopfoo}{%
    \unskip
    \end{mdframed}%
    \par\medskip
}
\DeclareMathOperator*{\argmax}{\arg\!\max}
\newcommand{\normp}[2]{\left\|#1\right\|_{#2}}
\newcommand{\STAB}[1]{\begin{tabular}{@{}c@{}}#1\end{tabular}}
\title{Coresets via Bilevel Optimization for Continual Learning and Streaming}
\author{%
  Zalán Borsos \\
  Dept. of Computer Science\\
  ETH Zurich\\
  \texttt{zalan.borsos@inf.ethz.ch} \\
  % examples of more authors
   \And
   Mojm\'ir Mutn\'y \\
  Dept. of Computer Science\\
  ETH Zurich\\
  \texttt{mojmir.mutny@inf.ethz.ch} \\
  \And
  Andreas Krause \\
  Dept. of Computer Science\\
  ETH Zurich\\
  \texttt{krausea@ethz.ch} \\
  % Coauthor \\
  % Affiliation \\
  % Address \\
  % \texttt{email} \\
  % \AND
  % Coauthor \\
  % Affiliation \\
  % Address \\
  % \texttt{email} \\
  % \And
  % Coauthor \\
  % Affiliation \\
  % Address \\
  % \texttt{email} \\
  % \And
  % Coauthor \\
  % Affiliation \\
  % Address \\
  % \texttt{email} \\
}
\begin{document}

\maketitle

\begin{abstract}
  \emph{Coresets} are small data summaries that are sufficient for model training. They can be maintained online, enabling efficient handling of large data streams under resource constraints. However, existing constructions are \emph{limited} to \emph{simple models} such as $k$-means and logistic regression.
In this work, we propose a novel coreset construction via {\em cardinality-constrained bilevel optimization}. We show how our framework can efficiently generate coresets for deep neural networks, and demonstrate its empirical benefits in continual learning and in streaming settings.
\end{abstract}

\section{Introduction}
\looseness -1 More and more applications rely on predictive models that are learnt online.  A crucial, and in general open problem is to reliably maintain accurate models as data arrives over time.
\emph{Continual learning}, for example, refers to the setting where a learning algorithm is applied to a sequence of tasks,  without the possibility of revisiting old tasks. In the \emph{streaming} setting, the data arrives sequentially and the notion of task is not defined. For such practically important settings where data arrives in a non-iid manner, the performance of models can degrade arbitrarily.  This is especially problematic in the non-convex setting of deep learning, where this phenomenon is referred to as \emph{catastrophic forgetting} \cite{mccloskey1989catastrophic, french1999catastrophic}. 

One of the oldest and most efficient ways to combat catastrophic forgetting is the {\em replay memory}-based approach, where a small subset of past data is maintained and revisited during training. In this work, we investigate how to effectively generate and maintain such summaries via {\em coresets}, which are small, weighted subsets of the data. They have the property that a model trained on the coreset performs almost as well as when trained on the full dataset. Moreover, coresets can be effectively maintained over data streams, thus yielding an efficient way of handling massive datasets and streams.

\looseness -1 \paragraph{Contributions.} We present a novel and general coreset construction framework,\footnote{Our coreset library is available at \url{https://github.com/zalanborsos/bilevel_coresets}.} where we formulate the coreset selection as a {\em cardinality-constrained bilevel optimization} problem that we solve by greedy forward selection via matching pursuit. Using a reformulation via a proxy model, we show that our method is especially well suited for replay memory-based continual learning and streaming with neural networks. We demonstrate the effectiveness of our approach in an extensive empirical study.  A demo of our method for streaming can be seen in Figure \ref{fig:demo-stream}.

\section{Related Work}

\looseness -1 \paragraph{Continual Learning and Streaming.}
Continual learning with neural networks has received an increasing interest recently. The approaches for alleviating catastrophic forgetting fall into three main categories: using weight regularization to restrict deviation from parameters learned on old tasks \cite{kirkpatrick2017overcoming, v.2018variational}; architectural adaptations for the tasks \cite{rusu2016progressive}; and replay-based approaches, where samples from old tasks are either reproduced via a replay memory \cite{lopez2017gradient} or via generative models \cite{shin2017continual}. In this work, we focus on the replay-based approach, which provides strong empirical performance \cite{chaudhry2019continual}, despite its simplicity. In contrast, the more challenging setting of streaming using neural networks has received little attention. To the best of our knowledge, the replay-based approach to streaming has been tackled by  \cite{NIPS2019_9354, hayes2019memory, chrysakis2020online}, which we compare against experimentally.
\begin{figure*}[t!]
        \centering
          \includegraphics[width=\textwidth]{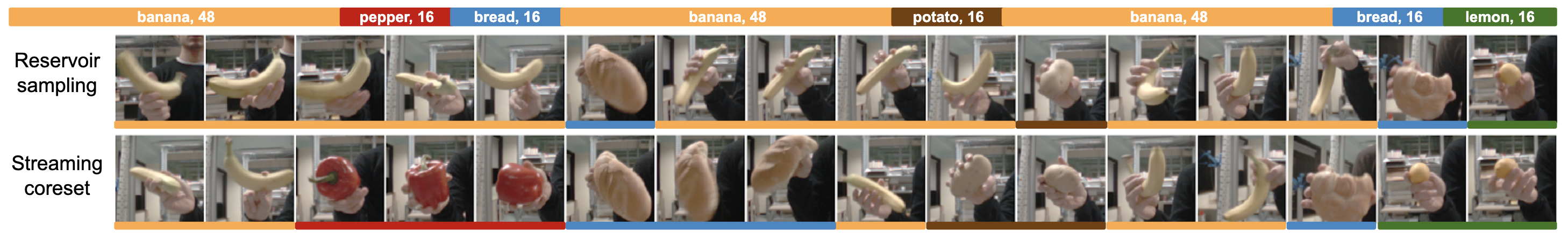}
          \vspace{-1mm}
          \caption{Data summarization on an imbalanced stream of images created from the iCub World 1.0 dataset \cite{fanello2013icub}.  First row: the stream's composition containing 5 object classes. Second row: selection by reservoir (uniform) sampling. Third row: selection by our method. Reservoir sampling misses classes (pepper) due to imbalance and does not choose diverse samples, in contrast to our method.} 
          \label{fig:demo-stream}
          \vspace{-2mm}
\end{figure*}

\paragraph{Coresets.}
\looseness -1 Several coreset definitions exist, each resulting in a different construction method. A popular approach is to require uniform approximation guarantees on the loss function, e.g., coresets for $k$-means \cite{feldman2011unified}, Gaussian mixture model \cite{lucic2017training}  and logistic regression \cite{huggins2016coresets}. 
However, this approach produces a large coreset for more complex hypothesis classes, prohibiting its successful application to models such as neural networks.
Several approaches relax the uniform approximation guarantees: the  Hilbert coreset  \cite{campbell2019automated} which poses the subset selection problem as a vector approximation in a normed vector space, and data selection techniques that were successfully employed in active learning with convolutional neural networks \cite{pmlr-v37-wei15, sener2018active}.\footnote{Although these methods were not designed for replay-based continual learning or streaming, they can be employed in these settings; we thus compare them with our method empirically.}
The term ``coreset'' in the continual learning literature is used more loosely and in most cases it is a synonym for the samples kept in the replay memory \cite{v.2018variational, farquhar2018towards}, with the most common construction methods being uniform sampling and clustering in feature or embedding space. While in \cite{hayes2019memory} the authors compare several data selection  methods on streaming, to the best of our knowledge, no thorough analysis has been conducted on the effect of the data summarization strategy in replay memory-based continual learning, which we address with an extensive experimental study.

\paragraph{Bilevel optimization.} Modeling hierarchical decision making processes \cite{vicente1994bilevel}, bilevel optimization has witnessed an increasing popularity in machine learning recently, with applications ranging from meta-learning \cite{finn2017model}, to hyperparameter optimization \cite{pedregosa2016hyperparameter, franceschi2018bilevel} and neural architecture search \cite{liu2018darts}. We use the framework of bilevel optimization to generate coresets. Closest to our work, bilevel optimization was used in \cite{ren2018learning} to reweigh samples to overcome training set imbalance or corruption,  and sensor selection was approached via bilevel optimization in  \cite{tapia2019}.  We note that, while the latter work uses a similar strategy for sensor subset selection to ours, we investigate the different setting of weighted data summarization for continual learning and streaming with neural networks. 

\section{Coresets via Bilevel Optimization}
We first present our coreset construction given a {\em fixed} dataset $\mathcal{X}=\{(x_i,y_i)\}_{i=1}^n$.\footnote{Our construction also applies in the unsupervised setting.} 
We consider a weighted variant of empirical risk minimization (ERM) where our goal is to minimize 
$L(\theta,w)=\sum_{i=1}^n w_i\ell_i(\theta),$
where $\ell_i(\theta)=\ell(x_i,y_i;\theta)$, and $w=\{w_1,\dots,w_n\}\in  \reals_+^n$ is a set of positive weights. Standard ERM is recovered when $w_i=1,\, \forall i \in [n]$, in which case we simply write $L(\theta)$.
A {\em coreset} is a weighted subset of $\mathcal{X}$, equivalently represented by a sparse vector $\hat{w}$, where points having zero weights are not considered to be part of the coreset.
A good coreset ensures that $L(\theta,\hat{w})$ is a good proxy for $L(\theta)$, i.e., optimizing on $L(\theta,\hat{w})$ over $\theta$ yields good solutions when evaluated on $L(\theta)$.  
In this spirit, a natural goal thus would be to determine a set of weights $\hat{w}$, such that we minimize
\begin{equation}\hat{w}\in  \argmin_{w\in\reals_+^n,\,||w||_0\leq m} L(\theta^*(w))\text{ s.t. } \theta^*(w) \in \argmin_\theta L(\theta,w),\label{eq:bilevel-coreset}\end{equation}
where $m \leq n$ is a constraint on the coreset size.
This problem is an instance of {\em bilevel optimization}, where we minimize an {\em outer} objective, here $L(\theta^*(w))$, which in turn depends on the solution $\theta^*(w)$ to an {\em inner} optimization problem $\argmin_\theta L(\theta,w)$. 
Before presenting our algorithm for solving \eqref{eq:bilevel-coreset}, we present some background on bilevel optimization.

\subsection{Background: Bilevel Optimization}
    Suppose $g:\Theta \times \mD  \rightarrow \reals$ and $f:\Theta \times \mD  \rightarrow \reals$, then the general form of bilevel optimization is  \begin{equation}\label{eq:bilevel-general}
            \begin{aligned}
            \min_{w \in \mD} \quad & G(w):=g(\theta^*(w), w)\\
            \textrm{s.t.} \quad & \theta^*(w) \in \argmin_{\theta \in \Theta} f(\theta, 
            w).
            \end{aligned}
        \end{equation}
    Our data summarization problem is recovered with $g(\theta^*(w),w)=L(\theta^*(w))$ and $f(\theta,w)=L(\theta,w)$.
    The general optimization problem above is known to be NP-hard even if both sets $\Theta$ and $\mD$  are polytopes and both $g$ and $f$ are linear. Provably correct solvers rely on branch and bound techniques \cite{Bard1998}.
    For differentiable $g$ and twice differentiable $f$, the above problem can however be heuristically solved via first-order methods. Namely, the constraint $\theta^*(w) \in \argmin_{\theta \in \Theta} f(\theta, w)$ can be relaxed to $\frac{\partial f(\theta^*, w)}{\partial \theta^*} = 0$. This relaxation is tight if $f$ is strictly convex.
     A crucial result that allows us apply first-order methods by enabling the calculation of the gradient of $G$ with respect to $w$ is the \emph{implicit function theorem} applied to $\frac{\partial f(\theta^*, w)}{\partial \theta^*} = 0$. Combined with the the total derivative and the chain rule,
    \begin{equation}
            \begin{aligned}
            \frac{d  G(w)}{dw} = \frac{\partial g}{\partial w} - \frac{\partial g}{\partial \theta} \cdot  \left(\frac{\partial^2 f}{\partial \theta \partial \theta^\top}\right)^{-1} \cdot \frac{\partial^2 f}{ \partial w \partial \theta^\top}
            \end{aligned}\label{eq:implicit-gradient},
        \end{equation}
        where the terms are evaluated at $\theta = \theta^*(w)$. At each step of the gradient descent on the outer objective, the inner objective needs to be solved to optimality.
        While this heuristic may be converging only to a stationary point \cite{Ghadimi2018}, such solutions are successful in many applications \cite{pedregosa2016hyperparameter,tapia2019}.

\subsection{Warm-up: Least Squares Regression, and Connections to Experimental Design}

    We first instantiate our approach to the problem of weighted and regularized least squares regression. In this case, the inner optimization problem, 
    $\hat{\theta}(w)=\argmin_\theta \sumin w_i ( x_i^\top \theta - y_i)^2+\lambda||\theta||_2^2,$ admits a closed-form solution. 
    For this special case there are natural connections to the literature on
{\em optimal experimental design}, a well-studied topic in statistics \cite{Chaloner1995}.

The data summarization problem with outer objective $g(\hat{\theta}) = \sum_{i=1}^n (x_i^\top\hat{\theta}(w)- y_i)^2$  is closely related to {\em Bayesian $V$-optimal design}. Namely, under standard Bayesian linear regression assumptions $y = X \theta + \epsilon$, $\epsilon \sim \mathcal{N}(0,\sigma^2I)$ and $\theta \sim \mathcal{N}(0,\lambda^{-1})$, the summarization objective $\Expect_{\epsilon, \theta}[g(\hat{\theta})]$ and the Bayesian V-experimental design outer objective $g_V(\hat{\theta}) = \Expect_{\epsilon, \theta}[\lVert X^\top (\hat{\theta}-\theta) \rVert^2_2]$ differ by $\sigma^2/2$ in the infinite data limit, as shown in Proposition~\ref{prop:cvg-bayes} in Appendix \ref{sec:appendix-exp-design}. Consequently, it can be argued that, in the large data limit, the optimal coreset with binary weights corresponds to the solution of Bayesian V-experimental design. We defer the detailed discussion to Appendix \ref{sec:appendix-exp-design}.

Using $g_V$ as our outer objective, solving the inner objective in closed form, we identify the Bayesian V-experimental design objective, 
\begin{equation*}
G(w) = \frac{1}{2n} \trace\left(  X\Big(\frac{1}{\sigma^2}X^\top D(w) X + \lambda I\right)^{-1}X^\top \Big),    
\end{equation*}
where $D(w):=\textup{diag}(w)$. In Lemma~\ref{lemma:convex} in Appendix \ref{sec:appendix-exp-design} we show that $G(w)$ is smooth and convex in $w$ when the integrality is relaxed. This, together with the relationship between $g$ and $g_V$, suggests that first order methods for solving the bilevel data summarization objective can be expected to succeed, at least in the large data limit and the special case of least squares regression.

\subsection{Incremental Subset Selection}
One challenge that we have not addressed yet is how to deal with the cardinality constraint $||w||_0\leq m$.
 A natural attempt is, in the spirit of the Lasso \cite{tibshirani1996regression}, to transform the $\norm{w}_0$ constraint into the $\norm{w}_1$ constraint. However, the solution of the inner optimization is unchanged if the weights and the regularizer  are rescaled by a common factor, rendering norm-based sparsity regularization meaningless. A similar observation was made in \cite{tapia2019}. 
On the other hand, a greedy combinatorial approach would treat $G$ as a set function  and  increment the set of selected points by inspecting marginal gains.
It turns out that Bayesian V-experimental design is approximately {\em submodular}, as shown in Proposition~\ref{prop:weak-sub} in Appendix \ref{sec:appendix-exp-design}. As a consequence, at least for the linear regression case, such greedy algorithms produce provably good solutions, if we restrict the weights to be binary \cite{Das2011, Harshaw2019}.
Unfortunately, for general losses the greedy approach comes at a significant cost: at each step, the bilevel optimization problem of finding the optimal weights must be solved for each point which may be added to the coreset. This makes greedy selection impractical.

\paragraph{Selection using Matching Pursuit.}
\begin{wrapfigure}{r}{7.2cm}
\vspace{-8mm}
\begin{minipage}[h]{.52\textwidth}
         \centering
      \begin{algorithm}[H]
        \caption{Coresets via Bilevel Optimization}
       \label{alg:bilevel_coreset}
    \begin{algorithmic}
       \STATE {\bfseries Input:} Data $\mathcal{X} =\{(x_i,y_i)\}_{i=1}^n$, coreset size $m$
       \STATE {\bfseries Output:} weights $w$ encoding the coreset
       \STATE $w = [0, \dots ,0]$
       \STATE Choose $i\in [n]$ randomly, set $w_i=1$, $S_1 = \{i\}$.
       \FOR{$t \in [2, ..., m]$}
       \STATE Find $w^*_{S_{t-1}}$ local min of $G(w)$  by projected GD s.t. $\textup{supp}(w^*_{S_{t-1}}) = S_{t-1}$ .
        \STATE $k^* = \argmin_{k\in [n]} \nabla_{w_k} G(w^*_{S_{t-1}})$
        \STATE $S_t=S_{t-1} \cup \{ k^* \}$,  $w_{k^*} = 1$
       \ENDFOR
    \end{algorithmic}
    \end{algorithm}
    \end{minipage}%
\end{wrapfigure}
    \looseness -1 In this work, we propose an efficient solution summarized in Algorithm \ref{alg:bilevel_coreset} based on {\em cone constrained generalized matching pursuit} \cite{Locatello2017a}.
    With the atom set $\mathcal{A}$ corresponding  to the standard basis of $\reals^n$, generalized matching pursuit proceeds by incrementally increasing the active set of atoms that represents the solution by selecting the atom that minimizes the linearization of the objective at the current iterate.
    The benefit of this approach is that incrementally growing the atom set can be stopped when the desired size $m$ is reached and thus the $\norm{w}_0 \leq m$ constraint is active. We use the fully-corrective variant of the algorithm, where, once a new atom is added, the weights are fully reoptimized by gradient descent using the implicit gradient (Eq. \eqref{eq:implicit-gradient}) with projection to positive weights.  
    
       Suppose a set of atoms $S_t \subset \mathcal{A}$ of size $t$ has already been selected. Our method proceeds in two steps. First, the bilevel optimization problem \eqref{eq:bilevel-coreset} is restricted to  weights $w$ having support $S_t$. Then we optimize to find the weights $w_{S_t}^*$ with domain support restricted to $S_t$ that represent a local minimum of $G(w)$ defined in Eq.~\ref{eq:bilevel-general} with $g(\theta^*(w),w)=L(\theta^*(w))$ and $f(\theta,w)=L(\theta,w)$. Once these weights are found, the algorithm increments $S_t$ with the atom that minimizes the first order Taylor expansion of the outer objective around  $ w_{S_t}^*$,
    \begin{equation}
            k^* = \argmin_{k\in [n]} e_k ^\top \nabla_w G(w_{S_t}^*),
        \label{eq:selection-heuristic}
    \end{equation}
    where $e_k$ denotes the $k$-th standard basis vector of $\reals^n$.
    In other words, the chosen point is the one with the largest negative implicit gradient (Eq. \eqref{eq:implicit-gradient}). 

    We can gain an insights into the selection rule in Eq. \eqref{eq:selection-heuristic} by expanding $\nabla_w G$ using Eq. \eqref{eq:implicit-gradient}. For this,  we use the inner objective $f(\theta, w) = \sumin w_i  \ell_i(\theta)$ without regularization for simplicity. Noting that $\frac{\partial^2 f(\theta, w) }{ \partial w_k \partial \theta^\top} = \nabla_\theta \ell_k(\theta)$ we can expand Eq. \eqref{eq:selection-heuristic} to get, 
    \begin{equation}
    k^*= \argmax_{k \in [n]} \nabla_\theta \ell_k(\theta)^\top \Bigg(\frac{\partial^2f(\theta, w^*_{S_t})}{\partial \theta \partial \theta^\top} \Bigg)^{-1}  \nabla_\theta g(\theta),
        \label{eq:explicit-selection-rule}
    \end{equation}
    where the gradients and partial derivatives are evaluated at $w_{S_t}^*$  and $\theta^*(w_{S_t}^*)$. Thus with the choice $g(\theta)=\sumin \ell_i(\theta)$, the selected point's gradient has the largest bilinear similarity with $\nabla_\theta \sumin \ell_i(\theta) $, where the similarity is parameterized by the inverse Hessian of the inner problem. 

\paragraph{Theoretical Guarantees.}
    If the overall optimization problem is convex (as in the case of Bayesian V-experimental design,  Lemma~\ref{lemma:convex} in Appendix \ref{sec:appendix-exp-design}), one can show that cone constrained generalized matching pursuit provably converges. Following \cite{Locatello2017a}, the convergence of the algorithm depends on properties of the atom set $\mathcal{A}$, which we do not review here in full.
    \begin{theorem}[cf. Theorem 2 of \cite{Locatello2017a}] Let $G$ be $L$-smooth and convex. After $t$ iterations in Algorithm \ref{alg:bilevel_coreset} we have, 
        $$ G(w_{S_{t}}^*) - G^* \leq  \frac{8L+4\epsilon_1}{t+3},$$
    where $t \leq m$ (number of atoms), and $\epsilon_1 = G(w^*_{S_1}) - G^*$ is the suboptimality gap at $t=1$.
    \end{theorem}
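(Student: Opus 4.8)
The plan is to reduce the statement to a direct invocation of the convergence guarantee for cone-constrained generalized matching pursuit (Theorem 2 of \cite{Locatello2017a}), so the bulk of the work is checking that Algorithm \ref{alg:bilevel_coreset} really is an instance of that scheme and that the hypotheses of the cited theorem are met. First I would fix the setup: the domain is the conic hull of the atom set $\mathcal{A} = \{e_1, \dots, e_n\}$, i.e.\ $\reals_+^n$, and the objective is $G(w)$, which is assumed $L$-smooth and convex on the relevant region. The algorithm maintains an active set $S_t$ of atoms, and in each round it (i) fully re-optimizes over the cone generated by the current active set --- this is exactly the ``fully-corrective'' step done by projected gradient descent to the local (here, by convexity, global) minimum $w_{S_t}^*$ with support in $S_t$ --- and (ii) augments the active set by the atom solving the linear-minimization step $k^* = \argmin_{k} e_k^\top \nabla_w G(w_{S_t}^*)$, which is precisely Eq.~\eqref{eq:selection-heuristic}. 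I would note that after $t$ iterations $|S_t| = t \le m$, so the cardinality budget is respected throughout.

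Next I would quote the rate from \cite{Locatello2017a}: for $L$-smooth convex objectives, fully-corrective cone-constrained generalized matching pursuit satisfies $G(w_{S_t}^*) - G^* \le \frac{c L + c' \epsilon_1}{t + t_0}$ for universal constants; plugging in the constants from their Theorem 2 gives the $\frac{8L + 4\epsilon_1}{t+3}$ bound, with $\epsilon_1 = G(w_{S_1}^*) - G^*$ being the suboptimality after the first atom is fixed. The one genuinely substantive point to verify is that the linear-minimization oracle used here is the correct one for the cone-constrained variant: selecting $\argmin_k e_k^\top \nabla G$ corresponds to minimizing the linearization over the atoms themselves (not over a rescaled or symmetrized set), and one must check this matches the oracle assumed in the cited analysis, including the sign convention (we want the atom with the most negative directional derivative, matching Eq.~\eqref{eq:explicit-selection-rule}'s $\argmax$ after the sign flip from the inverse-Hessian expansion). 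I would also remark that since $G$ is convex here, the ``local min'' found by projected GD in the fully-corrective step is in fact the exact minimizer over the face, so the convergence analysis applies verbatim rather than only approximately.

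The main obstacle I anticipate is bookkeeping around the constants and the initialization: \cite{Locatello2017a} typically states the rate in terms of an atomic-norm-type radius of the feasible region and the smoothness constant measured in the corresponding norm, and one has to confirm that for the standard-basis atom set these quantities collapse to the clean form stated, and that the ``$+3$'' in the denominator and the factors $8$ and $4$ come out exactly (this is just tracking their recursion $\epsilon_{t+1} \le \epsilon_t - \frac{\epsilon_t^2}{\text{const}\cdot(L + \epsilon_1)}$ or similar and solving it). A secondary subtlety is that the first atom is chosen \emph{randomly} rather than by the oracle, which is why the bound is phrased relative to $\epsilon_1$ (the post-first-step gap) instead of an absolute constant; I would make explicit that the analysis only needs the fully-corrective + oracle steps for $t \ge 2$, so the random first choice is harmless and merely determines $\epsilon_1$. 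Everything else is a routine specialization of the cited theorem, so I would keep the proof short: state the correspondence, invoke Theorem 2 of \cite{Locatello2017a}, and specialize the constants.
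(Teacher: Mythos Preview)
Your proposal is correct and matches the paper's approach exactly: the paper does not give a standalone proof but simply cites Theorem~2 of \cite{Locatello2017a}, relying on the observation that Algorithm~\ref{alg:bilevel_coreset} is precisely fully-corrective cone-constrained generalized matching pursuit over the atom set $\mathcal{A}=\{e_1,\dots,e_n\}$. Your outline of verifying the correspondence (linear-minimization oracle, fully-corrective step, random initialization absorbed into $\epsilon_1$) and then invoking the cited rate is exactly the intended argument, and in fact spells out more detail than the paper itself provides.
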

    Thus, by iterating $m-1$ times, we reach the cardinality constraint. Note that by imposing a bound on the weights as commonly done in experimental design \cite{Fedorov1972}, our algorithm would be equivalent to Frank-Wolfe \cite{Frank1956}.
    While in general the function $G$ might not be convex for more complex models, we  nevertheless demonstrate  the effectiveness of our method empirically in such scenarios in Section~\ref{sec:experiments}. 

\subsection{Relation to Influence Functions}
\label{subset:connections-robust-stats}
It turns out our approach is also related to incremental subset selection via {\em influence functions}. The empirical influence function, known from robust statistics \cite{cook1980characterizations}, denotes the effect of a single sample on the estimator. Influence functions have been recently used in \cite{koh2017understanding}  to understand the dependence of  neural network predictions on a single training point and to generate adversarial training examples. 
To uncover the relationship of our method to influence functions, let us consider the influence of the $k$-th point on the outer objective. Suppose that we have already selected the subset $S$ and found the corresponding  weights $w_S^*$. Then, the influence of point $k$ on the outer objective is
\begin{equation*}
    \begin{aligned}
    \quad & \mathcal{I}(k)   := - \frac {\partial \sumin  \ell_i(\theta^*) }{\partial \eps} \bigg|_{\eps =0}, \quad
    \textrm{s.t.}  & \theta^* = \argmin_\theta \sumin  w^*_{S,i}\,  \ell_i(\theta) +  \bm{\eps\ell_k(\theta)}. \\
    \end{aligned}\label{eq:influence-func}
\end{equation*}
Following \cite{koh2017understanding} and using the result of \cite{cook1982residuals}, under twice differentiability and strict convexity of the inner loss, the empirical influence function at $k$ is $\frac{\partial \theta^* }{ \partial \eps} \big|_{\eps =0} \!=\! -\left(\frac{\partial^2 \sumin w^*_{S,i}  \ell_i(\theta^*)}{\partial \theta \partial \theta^\top} \right)^{-1} \nabla_\theta \ell_k(\theta^*)$.
Now, applying the chain rule to $\mathcal{I}(k)$, as shown in Proposition \ref{prop:infl-fns} in Appendix \ref{app:sec-influence-fns}, we can see that $\argmax_{k} \mathcal{I}(k)$ and the selection rule in  Equation \eqref{eq:explicit-selection-rule} are the same.

\section{Coresets for Neural Networks} \label{sec:coreset-for-nn}
\looseness -1 While our proposed coreset framework is generally applicable to any twice differentiable model, we showcase it for the challenging case of deep neural networks. In applying Algorithm \ref{alg:bilevel_coreset} to a neural network with a large number of parameters, inverting the Hessian in each outer iteration (Eq.~\eqref{eq:implicit-gradient}) is an impeding factor. Several works propose Hessian-vector product approximations, for example, through the conjugate gradient algorithm \cite{pedregosa2016hyperparameter} or Neumann series \cite{lorraine2019optimizing}. While applicable in our setting, these methods become costly depending on the number of coreset points and outer iterations.

We propose an alternative approach that results in a large speedup for small coreset size (max. 500 points).
The key idea is to use a  {\em proxy model} in the bilevel optimization that provides a good data summary for the original model.
In this work, our choice for proxies are functions in a reproducing kernel Hilbert space (RKHS) $\mathcal{H}$ with associated kernel $k$, possibly adjusted to the neural network architecture. We assume $k$ to be positive definite for simplicity and we use the same convex loss as for the  neural network. Now, the coreset generation problem with regularized inner objective transforms into
$$ \min_{w\in \mathbb{R}_+^n,\,\normp{w}{0}\leq m}  g(h^*)
    \textrm{ s.t. }  h^*=\argmin_{h \in \mathcal{H}} \sumin w_i  \ell_i(h) +\lambda \normp{h}{\mathcal{H}}^2.$$
\looseness -1 The regularizer $\lambda$ could also be optimized jointly with $w$, but we use  fixed values in our applications. Now suppose we have selected a subset of atoms and denote their index set as $S$. From the \emph{representer theorem}   we know that the inner problem admits the representation $h^*(\cdot)  = \alpha^\top K_{S,\cdot}$, where $\alpha \in \reals^{|S|} $ and $K$ is the Gram matrix associated with the data. Now the bilevel optimization problem for optimizing the weights takes the form
\begin{equation}\label{eq:bilevel-coreset-hilbert-v2}
    \min_{w\in \mathbb{R}_+^n,\,\,\textup{supp}(w)=S} \quad  g(\alpha^{*\top} K_{S,\cdot}), \quad \textrm{s.t.} \, \alpha^* = \argmin_{\alpha\in \reals^{|S|}} \sum_{i \in S} w_i  \ell_i(\alpha^\top  K_{S,i}) +\lambda \alpha^\top K_{S,S} \alpha.
\end{equation}
That is, with the help of the representer theorem, we can reduce the size of the inner level parameters to at most the size $m$ of the coreset. This allows us to use fast solvers for the inner problem, and to use the aforementioned approximate inverse Hessian-vector product methods when calculating the implicit gradient. In this work, we use the conjugate gradient method \cite{pedregosa2016hyperparameter}.

\looseness -1 The choice of $\mathcal{H}$ is crucial for the success of the proxy reformulation. For neural networks, we propose to use  as proxies their corresponding {\em Neural Tangent Kernels (NTK)} \cite{jacot2018neural}, which are fixed kernels characterizing the network's training with gradient descent in the infinite-width limit. While other proxy choices are possible (e.g., fixed last-layer embedding with linear kernel on top, or even RBF kernel, see Appendix \ref{sec:continual-learning-streaming-experiments}) we leave their investigation for future work. 

\paragraph{Computational Cost.}
\looseness -1  In the standard formulation, each inner iteration is performed using a small number of SGD steps to find an approximate minimizer of the inner optimization problem. The bottleneck is introduced by the outer descent step: the implicit gradient approximation via 30 conjugate gradient steps requires one minute for a ResNet-18 on a GPU, which makes the approach impractical for weighted coreset selection. The proxy reformulation reduces the number of parameters to $\mathcal{O}(m)$ and its time complexity  depends cubically on the coreset size $m$. As a result, an outer descent step is $200 \times$ faster in the proxy compared to the standard formulation. On the other hand, the proxy reformulation introduces the overhead of calculating the proxy kernel --- for calculating the NTK efficiently, we rely on the library of \cite{neuraltangents2020}. We measure the runtime of coreset generation in the proxy formulation in Section \ref{subsec:streaming}. Further speedups are discussed in Appendix \ref{sec:appendix-hyperparams}.

\looseness -1 \paragraph{Limitations.} While our proposed framework excels at small coreset sizes, generating summaries larger than 500 incurs significant computational overhead. A possible remedy is to perform the greedy selection step in batches or to perform greedy elimination instead of forward selection. In terms of theoretical guarantees, due to the hardness of the cardinality-constrained bilevel optimization problem, our method is a heuristic for coreset selection for neural networks.

\section{Applications in Continual Learning and Streaming Deep Learning} \label{sec:applications}

\looseness -1 We now demonstrate how our coreset construction can achieve significant performance gains in continual learning and in the more challenging streaming settings with neural networks. We build on approaches that alleviate catastrophic forgetting by keeping representative past samples in a replay memory. Our goal is to compare our method to other data summarization strategies for managing the replay memory.
We keep the network structure fixed during training. This is termed as the  ``single-head'' setup, which is more challenging  than instantiating new top layers for different tasks (``multi-head'' setup) and does not assume any knowledge of the task descriptor during training and test time \cite{farquhar2018towards}.

For {\em continual learning with replay memory} we employ the following protocol. The learning algorithm receives data $\mathcal{X}_1,\dots,\mathcal{X}_T$ arriving in order from $T$ different tasks. At time $t$, the learner receives $\mathcal{X}_t$ but can only access past data through a small number of samples from the replay memory of size $m$. We assume that equal memory is allocated for each task in the buffer, and that the summaries $C_1,\dots,C_T$ are created per task, with weights equal to 1. Thus, the optimization objective at time $t$ is
\[\min_\theta \frac{1}{|\mathcal{X}_t|}\sum_{(x,y) \in \mathcal{X}_t} \ell(x,y; \theta) + \beta  
   \sum_{\tau=1}^{t-1} \frac{1}{|\mathcal{C}_\tau|}\sum_{(x,y) \in \mathcal{C}_\tau} \ell(x,y; \theta),\]
where $ \sum_{\tau=1}^{t-1} |C_\tau|\!=\!m$,  and $\beta$ is a hyperparameter controlling the regularization strength of the loss on the samples from the replay memory. After performing the optimization, $\mathcal{X}_t$ is summarized into  $\mathcal{C}_t$ and added to the buffer, while previous summaries $\mathcal{C}_1,\dots, \mathcal{C}_{t-1}$ are shrunk such that $|\mathcal{C}_\tau| = \lfloor m/t \rfloor$. The shrinkage is performed by running the summarization algorithms on each $\mathcal{C}_1,\dots, \mathcal{C}_{t-1}$ again, which for greedy strategies is equivalent to retaining the first $\lfloor m/t \rfloor$ samples from each summary.

\begin{wrapfigure}{r}{0.37\textwidth}
  \centering
\vspace{-4mm}
   \scalebox{.6}{\input{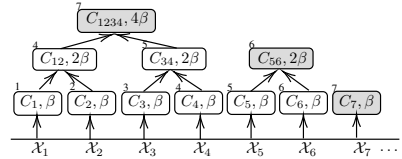}}
    \caption{Merge-reduce on 7 steps with a buffer with 3 slots. The grey nodes are in the buffer after the 7 steps, the numbers in the upper left corners represent the construction time of the corresponding coresets.}
    \label{fig:merge-reduce}
    \vspace{-5mm}
\end{wrapfigure}

The {\em streaming setting} is more challenging: we assume the learner is faced with small data batches $\mathcal{X}_1,...,\mathcal{X}_T$ arriving in order, but the batches do not contain information about the task boundaries.  In fact, even the notion of tasks might not be defined. Denoting by $\mathcal{M}_{t}$ the replay memory at time $t$, the optimization objective at time $t$ for  learning under streaming with replay memory is
\[    \min_\theta  \frac{1}{|\mathcal{X}_t|} \sum_{(x,y) \in \mathcal{X}_t} \ell(x,y; \theta) + \frac{\beta}{|\mathcal{M}_{t-1}|}  \sum_{(x,y) \in \mathcal{M}_{t-1}}   \ell(x,y; \theta).\]

\looseness -1 \paragraph{Streaming Coresets via Merge-Reduce.} Managing the replay memory is crucial for the success of our method in streaming. We offer a principled way to achieve this, naturally supported by our framework, using the following idea: two coresets  can be summarized into a single one by applying our bilevel  construction with the outer objective as the loss on the union of the two coresets.
Relying on this idea, we use a variant of the merge-reduce framework of \cite{chazelle1996linear}. For this, we divide the buffer into $s$ equally-sized slots. We associate regularizers $\beta_i$ with each of the slots, which will be \emph{proportional to the number of points} they represent.  A new batch  is compressed into a new slot with associated $\beta $ and  it is appended to the buffer, which now might contain an extra slot. The reduction to size $m$ happens as follows: select consecutive slots $i$ and $i+1$ based on Algorithm \ref{alg:select-index} in Appendix \ref{sec:appendix-merge-reduce}, then join the  contents of the slots ({\em merge}) and create the coreset of the merged data ({\em reduce}). 
The new coreset replaces the two original slots with $\beta_i +\beta_{i+1} $ associated with it (Algorithm \ref{alg:streaming-coreset} in Appendix \ref{sec:appendix-merge-reduce}). We illustrate the merge-reduce coreset construction for a buffer with 3 slots and 7 steps in Figure \ref{fig:merge-reduce}.

\begin{figure*}[t!]
\centering
\begin{subfigure}[c]{0.32\textwidth}
\centering
\vspace{-1mm}
  \includegraphics[width=\linewidth]{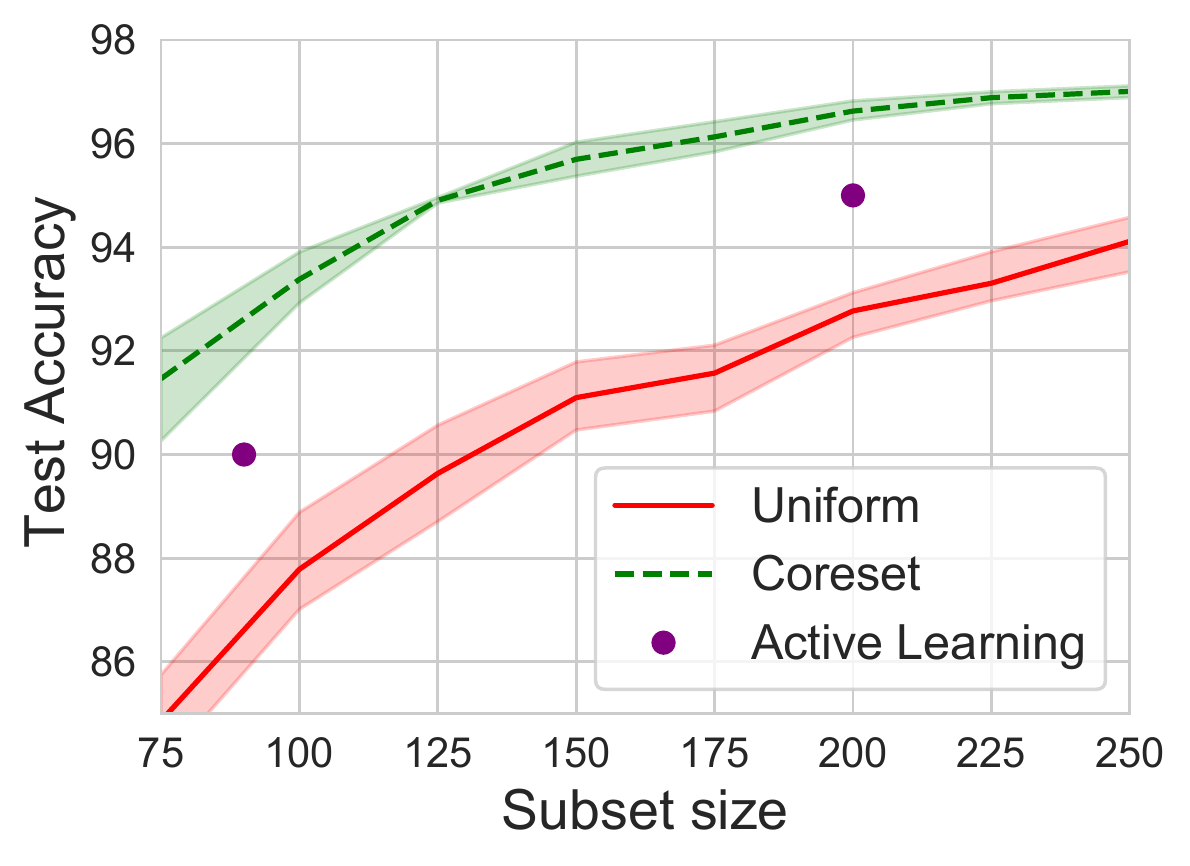}
  \vspace{-3mm}
  \caption{CNN on MNIST}
  \label{fig:cnn-mnist}
\end{subfigure}
\begin{subfigure}[c]{0.32\textwidth}
\centering
\vspace{-1mm}
  \includegraphics[width=0.99\linewidth]{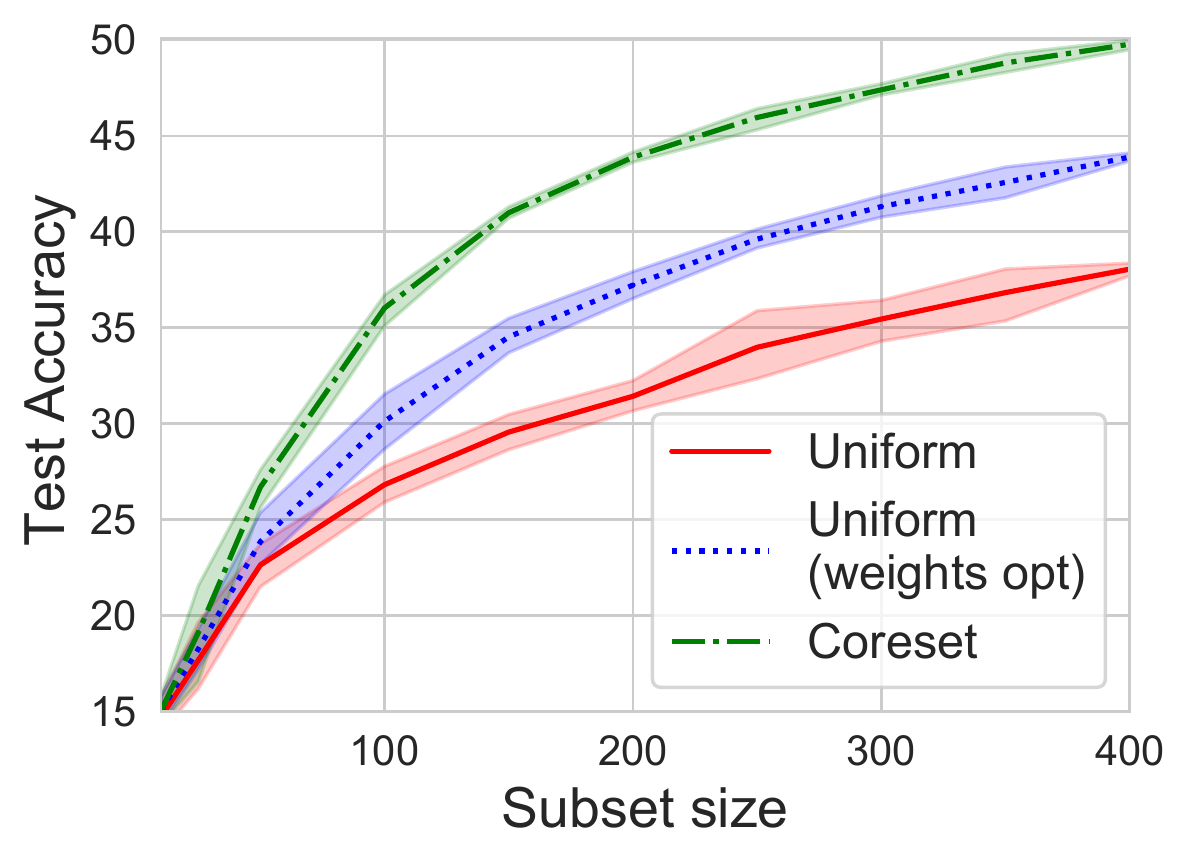}
  \vspace{-3mm}
  \caption{KRR with CNTK on CIFAR-10}
  \label{fig:krr-cifar-10}
  \end{subfigure}
\begin{subfigure}[c]{0.32\textwidth}
\centering
\vspace{-1mm}
  \includegraphics[width=0.99\linewidth]{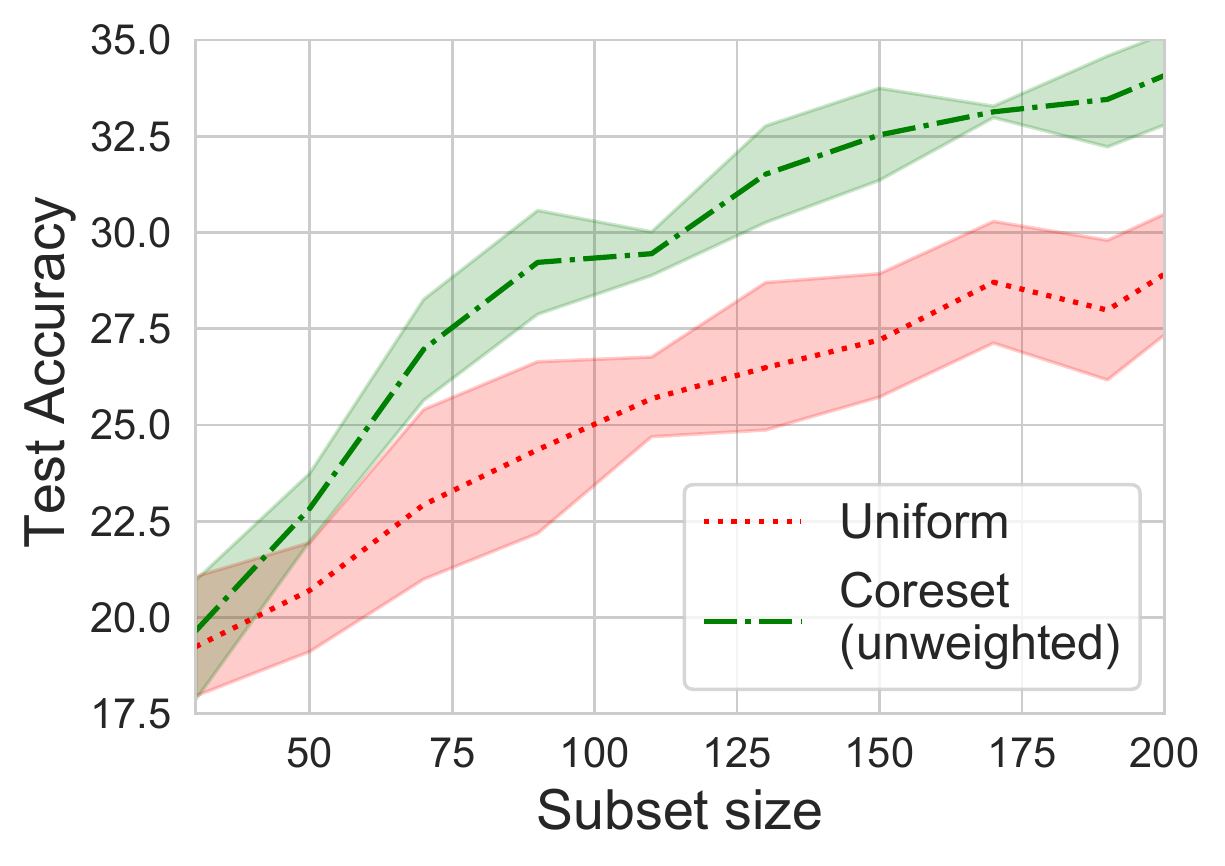}
  \vspace{-3mm}
  \caption{ResNet-18 on CIFAR-10}
  \label{fig:resnet-cifar-10}
  \end{subfigure}
  \vspace{-1mm}
  \caption{a) Performance of a CNN trained on subsets of MNIST. b) Kernel ridge regression (KRR) on subsets of CIFAR-10. Our method obtains almost 50\% test accuracy on CIFAR-10 when trained on only 400 points. c) ResNet-18 trained on subsets of  CIFAR-10. Coresets have binary weights.}
  \vspace{-2mm}
\end{figure*}

\section{Experiments} \label{sec:experiments}
We demonstrate the empirical effectiveness of our method in various settings. In all our experiments we generate the coreset via our proxy formulation. For each neural network architecture  we calculate the corresponding (convolutional) neural tangent kernel  without pooling using the library of \cite{neuraltangents2020}.

\subsection{Dataset summarization} \label{sec:dataset-summary}

\looseness -1 We showcase our method by training a convolutional neural network (CNN) on a small subset of MNIST selected by our coreset construction.  The CNN consists of two blocks of convolution, dropout, max-pooling and ReLU activation, where the number of filters are 32 and 64 and have size 5x5, followed by two fully connected layers of size 128 and 10 with dropout. The dropout probability is 0.5. The CNN is trained on the data summary using Adam with learning rate $5 \cdot 10^{-4}$.
The results for summarizing MNIST are shown in Figure \ref{fig:cnn-mnist}, where we plot the test accuracy against the summary size over 5 random seeds for uniform sampling, coreset generation, and for the state-of-the-art in active learning for our chosen CNN architecture \cite{kirsch2019batchbald}.

\looseness -1 Our next experiment follows \cite{arora2019} and solves classification on CIFAR-10 \cite{krizhevsky2009learning} via kernelized ridge regression (KRR) applied to the one-hot-encoded labels $Y$. KRR can be solved on a subset $S$ in closed-form  $\alpha^* = (D(w_S)  K_{S,S}+\lambda \mathbb{I})^{-1} D(w_S)   Y_S$, which replaces the inner optimization problem. 
We solve the coreset selection for KRR using the CNTK proposed in \cite{arora2019} with 6 layers and global average pooling, with normalization. We also experiment with uniform sampling of points with weights optimized through bilevel optimization. The results are shown in Figure \ref{fig:krr-cifar-10}. We observe that with CNTK we can obtain a test accuracy of almost 50$\%$ with only 400 samples.
\looseness -1 We further validate our proxy formulation by selecting subsets of CIFAR-10 and training a variant of ResNet-18 \cite{he2016deep} without batch normalization on the chosen subsets. We restrict the coreset weights to binary in order to show that choosing representative \emph{unweighted} points can alone improve the performance, as illustrated in Figure \ref{fig:resnet-cifar-10}.

\subsection{Continual Learning}
    
We next validate our method in the replay memory-based approach to continual learning. We use the following 10-class classification datasets:
\begin{itemize}[topsep=0.5pt]
\itemsep0em 
    \item \textbf{PermMNIST} \cite{goodfellow2013empirical}: consist of 10 tasks, where in each task all images' pixels undergo the same fixed random permutation.
    \item \textbf{SplitMNIST} \cite{pmlr-v70-zenke17a}: MNIST is split into 5 tasks, where each task consists of distinguishing between consecutive image classes. 
    \item \textbf{SplitCIFAR-10}: similar to SplitMNIST on CIFAR-10.
\end{itemize}
\looseness -1 We keep a subsample of 1000 points  for each task for all datasets, while we retain the full test sets.  For PermMNIST we use a fully connected net with two hidden layers with 100 units, ReLU activations, and dropout with probability 0.2 on the hidden layers. For SplitMNIST we use the CNN described in Section \ref{sec:dataset-summary}. We fix the replay memory size $m=100$ for these task. For  SplitCIFAR-10 we the ResNet presented in Section \ref{sec:dataset-summary} and set the memory size to  $m=200$. We train our networks for 400 epochs using Adam with step size $5\cdot10^{-4}$ after each task. 

\looseness -1 We perform an exhaustive comparison of our method to other  data selection methods proposed in the continual learning or the coreset literature, under the protocol described in Section \ref{sec:applications}. These include, among others, $k$-center clustering in last layer embedding \cite{sener2018active} and feature space \cite{v.2018variational}, iCaRL's selection \cite{rebuffi2017icarl} and retaining the hardest-to-classify points \cite{aljundi2019task}.   For each method, we report the test accuracy averaged over tasks on the best buffer regularization strength $\beta$. For a fair comparison to other methods in terms of summary generation time, we restrict our method in all of the continual learning and streaming experiments to using \emph{binary coreset weights} only. A selection of results is presented in Table \ref{table:cl-res}, while the full list is available in Appendix \ref{sec:continual-learning-streaming-experiments}. 
Our coreset construction consistently performs among the best on all datasets. In Appendix \ref{sec:continual-learning-streaming-experiments} we present a study of the effect of the replay memory size.

\looseness -1 Our method can also be combined with different approaches to continual learning, such as  VCL \cite{v.2018variational}. While VCL also relies on coresets, it was proposed with uniform and $k$-center summaries. We replace these with our coreset construction, and, following \cite{v.2018variational}, we conduct an experiment using a single-headed two-layer network with 256 units per layer and ReLU activations, where the coreset size is set to 20 points per task. The results in Table \ref{table:vcl-stream-res} corroborate the advantage of our method over simple selection rules, and suggest that VCL can benefit from representative coresets.

\begin{table*}[t]
    \centering
    \caption {Continual learning with replay memory size of 100 for versions of MNIST and 200 for CIFAR-10. We report the average test accuracy over the tasks over 5 runs with different random seeds. Our coreset construction performs among the best on all datasets.   \label{table:cl-res}}
\begin{tabular}{@{}cccc@{}}
        \toprule
          \textbf{Method}                       & \textbf{PermMNIST}                   & \textbf{SplitMNIST}                  & \textbf{SplitCIFAR-10} \\ \midrule
          Uniform sampling                                    & 78.46 $\pm$ 0.40  & 92.80 $\pm$ 0.79  & \textbf{36.20 $\pm$ 3.19}  \\
          $k$-means of features    & 78.34 $\pm$ 0.49  & 93.40 $\pm$ 0.56  & 33.41 $\pm$ 2.48  \\
          $k$-center of embeddings & 78.57 $\pm$ 0.58  & 93.84 $\pm$ 0.78  & \textbf{36.91 $\pm$ 2.42}  \\
          Hardest samples           & 76.79 $\pm$ 0.55  & 89.62 $\pm$ 1.23  & 28.10 $\pm$ 1.79    \\
          iCaRL's selection  & \textbf{79.68 $\pm$ 0.41}  & 93.99 $\pm$ 0.39  & 34.52 $\pm$ 1.62   \\
          \textbf{Coreset}                                    & \textbf{79.26 $\pm$ 0.43}  & \textbf{95.87 $\pm$ 0.20}  & \textbf{37.60 $\pm$ 2.41}  \\
       \bottomrule
    \end{tabular}
\end{table*}

\begin{table*}[t]
\begin{minipage}[c]{0.6\textwidth}
    \centering
    \caption {Upper: VCL with 20 summary points / task. VCL can benefit from our coreset. Lower: Streaming with buffer size 100. Coreset methods use the merge-reduce buffer. \label{table:vcl-stream-res}}
\begin{tabular}{@{}cccc@{}}
        \toprule
         &\textbf{Method}                       & \textbf{PermMNIST}                   & \textbf{SplitMNIST}                  \\ \midrule
         \multirow{3}{*}{\STAB{\rotatebox[origin=c]{90}{\large{VCL}}}} 
         & $k$-center & 85.33 $\pm$ 0.67  & 65.71 $\pm$ 3.17  \\
         &  Uniform & 84.96 $\pm$ 0.17 & 80.06 $\pm$ 2.19\\
          & \textbf{Coreset} & \textbf{86.18 $\pm$ 0.21} & \textbf{84.66 $\pm$ 0.66} \\ \midrule
           \multirow{3}{*}{\STAB{\rotatebox[origin=c]{90}{\large{Stream}}}} 
          &Train on coreset only  & 45.03 $\pm$ 1.31 & 89.99 $\pm$ 0.76\\
            & Reservoir sampling                             &  73.21  $\pm$ 0.59 & 90.72  $\pm$ 0.97    \\
             & \textbf{Streaming coreset}                                   & \textbf{74.44 $\pm$ 0.52} & \textbf{92.59 $\pm$ 1.20}  \\
       \bottomrule
        \end{tabular}
\end{minipage}
\hfill
\begin{minipage}[c]{0.38\textwidth}
% \vspace{-9mm}
    \centering
  \includegraphics[width=\linewidth]{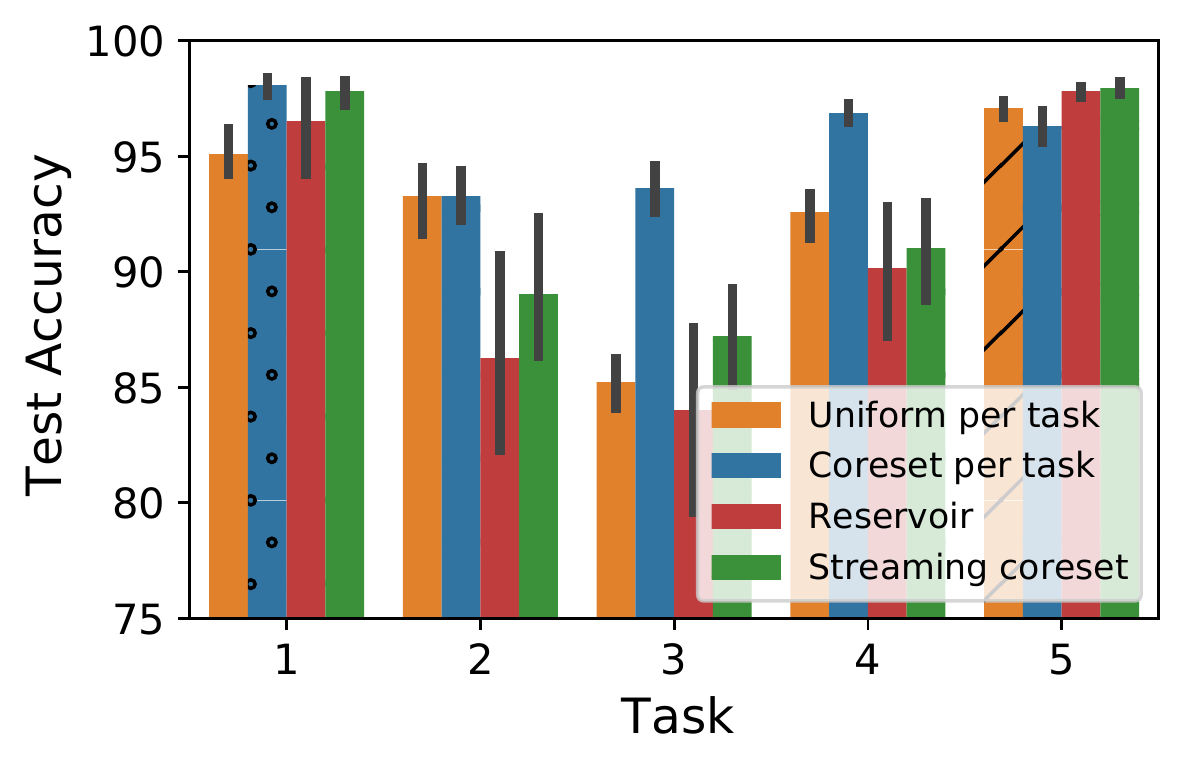}
      \vspace{-5mm}
    \captionof{figure}{Per-task test accuracy on SplitMNIST. Out method represents most of the tasks better than uniform / reservoir sampling.}

    \label{fig:results-per-task-splitmnist}
\end{minipage}
\vspace{-2mm}
\end{table*}

\subsection{Streaming} \label{subsec:streaming}

 \looseness -1 We now turn to the more challenging streaming setting, which is oblivious to the existence of tasks. For this experiment, we modify PermMNIST and SplitMNIST by first concatenating all tasks for each dataset and then streaming them in batches of size 125. We fix the replay memory size to $m=100$ and set the number of slots $s=10$. We train our networks for 40 gradient descent steps using Adam with step size $5\cdot10^{-4}$ after each batch. We use the same architectures as in the previous experiments. 
 
  \looseness -1 We compare our coreset selection to reservoir sampling \cite{vitter1985random} and the sample selection methods of \cite{NIPS2019_9354} and \cite{hayes2019memory}. We were unable to tune the latter two to outperform reservoir sampling, except \cite{NIPS2019_9354} on PermMNIST, achieving test accuracy of 74.43 $\pm$ 1.02.
 Table \ref{table:vcl-stream-res} confirms the dominance of our strategy over the competing methods and the validity of the merge-reduce framework.
 For inspecting the gains obtained by our method over uniform / reservoir sampling, we plot the final per task test accuracy on SplitMNIST in Figure \ref{fig:results-per-task-splitmnist}. We notice that the advantage of the coreset method does not come from excelling at one particular task, but rather by representing the majority of tasks better than uniform sampling. We have also experimented with streaming on CIFAR-10 with buffer size $m=200$, where our coreset construction did not outperform reservoir sampling. However, when the task representation in the stream is imbalanced, our method has significant advantages, as we show in the following experiment.
 
\begin{table}[h]
% \vspace{-3mm}
\begin{minipage}[c]{0.52\textwidth}
    \centering
    \caption {Imbalanced streaming on SplitMNIST and SplitCIFAR-10. Our proposed method is competitive with strategies designed for imbalanced streams. \label{table:imbalanced-stream-main}}
    \begin{tabular}{@{}ccc@{}}
        \toprule
          \textbf{Method}                       & \textbf{SplitMNIST}                   & \textbf{SplitCIFAR-10}                 \\ \midrule
          Reservoir                                   & 80.60 $\pm$ 4.36 &  27.22 $\pm$  1.24    \\
        CBRS    & 89.71 $\pm$ 1.31  & \textbf{32.25 $\pm$ 1.69}   \\
         \textbf{Coreset}  & \textbf{92.30 $\pm$ 0.23} & \textbf{33.98  $\pm$  1.44}   \\
       \bottomrule
    \end{tabular}
\end{minipage}
\hfill
\begin{minipage}[c]{0.46\textwidth}
\centering
% \vspace{-1mm}
\caption {Runtimes for generating coresets out of 1000 points with CNTKs\label{table:runtimes} for the CNN and ResNet-18 described in Sec. \ref{sec:dataset-summary}.}
\begin{tabular}{@{}ccc@{}}
\toprule
\textbf{Op / CNTK}                   & \textbf{CNN} & \textbf{ResNet-18} \\ \midrule
\textbf{Kernel calc.} & 6.3 s                                                          & 56.2 s                                                            \\ 
\textbf{Coreset 100} & 21.2 s                                                         & 25.4 s                                                            \\
\textbf{Coreset 400} & 186.7 s                                                         & 188.9 s                                                            \\\bottomrule
\end{tabular}
\end{minipage}
% \vspace{-2mm}
\end{table}
  
\looseness -1 \paragraph{Imbalanced Streaming.} The setup of the streaming experiment favors reservoir sampling, as the data in the stream from different tasks is balanced. We illustrate the benefit of our method in the more challenging scenario when the task representation is {\em imbalanced}. Similarly to \cite{NIPS2019_9354}, we create imbalanced streams from SplitMNIST and SplitCIFAR-10, by retaining 200 random samples from the first four tasks and 2000 from the last task. In this setup, reservoir sampling will underrepresent the first tasks. For SplitMNIST we set the replay buffer size to $m=100$ while for SplitCIFAR-10 we use $m=200$. We evaluate the test accuracy on the tasks individually, where we do not undersample the test set. We train on the two imbalanced streams  the CNN and the ResNet-18 described in Section \ref{sec:dataset-summary}, and set the number of slots to $s=1$. We compare our method to reservoir sampling and class-balancing reservoir sampling (CBRS) \cite{chrysakis2020online}. The results in Table \ref{table:imbalanced-stream-main} confirm the flexibility of our framework which is competitive with methods specifically designed for imbalanced streams.
 
\looseness -1 \paragraph{Runtime.} In  Table \ref{table:runtimes} we measure the runtime of selecting 100 and 400 coreset points from a batch of 1000 points with our proxy formulation, with the CNTK corresponding to the two convolutional networks used in the experiments. The CNTKs are calculated on a GeForce GTX 1080 Ti GPU while the coreset selection is performed on a single CPU.  Due to the non-linear increase of computation time in terms of the coreset size, our proxy reformulation is most practical for smaller coreset sizes.

\section{Conclusion}
\looseness -1 We presented a novel framework for coreset generation based on bilevel optimization with cardinality constraints. 
We theoretically established connections to experimental design and empirical influence functions.  We showed that our method yields representative data summaries for neural networks and illustrated its advantages in alleviating catastrophic forgetting in continual learning and streaming deep learning, where our coreset construction performs among the best summarization strategies.

\section*{Broader Impact}

\looseness -1 Coresets can efficiently handle large datasets under computational constraints, thus significantly reducing computational costs and possibly the energy consumption for applications. They also provide an avenue towards limiting the amount of data that needs to be stored, with possible privacy benefits. Explicitly optimizing representativeness of the retained samples beyond accuracy (e.g., for counteracting biases in the data) is a promising direction for future work.

\begin{ack}
 This research was supported by the SNSF grant 407540\_167212 through the NRP 75 Big Data program and by the European Research Council (ERC) under the European Union’s Horizon 2020 research and innovation programme grant agreement No 815943.
\end{ack}

\bibliographystyle{abbrv}
\bibliography{bibliography.bib}

\begin{thebibliography}{10}

\bibitem{aljundi2019task}
R.~Aljundi, K.~Kelchtermans, and T.~Tuytelaars.
\newblock Task-free continual learning.
\newblock In {\em Proceedings of the IEEE Conference on Computer Vision and
  Pattern Recognition}, pages 11254--11263, 2019.

\bibitem{NIPS2019_9354}
R.~Aljundi, M.~Lin, B.~Goujaud, and Y.~Bengio.
\newblock Gradient based sample selection for online continual learning.
\newblock In H.~Wallach, H.~Larochelle, A.~Beygelzimer, F.~d~Alch\'{e}-Buc,
  E.~Fox, and R.~Garnett, editors, {\em Advances in Neural Information
  Processing Systems 32}, pages 11816--11825. Curran Associates, Inc., 2019.

\bibitem{arora2019}
S.~Arora, S.~S. Du, W.~Hu, Z.~Li, R.~R. Salakhutdinov, and R.~Wang.
\newblock On exact computation with an infinitely wide neural net.
\newblock In H.~Wallach, H.~Larochelle, A.~Beygelzimer, F.~d~Alch\'{e}-Buc,
  E.~Fox, and R.~Garnett, editors, {\em Advances in Neural Information
  Processing Systems 32}, pages 8139--8148. Curran Associates, Inc., 2019.

\bibitem{arthur2006k}
D.~ARTHUR.
\newblock k-means++: The advantages of careful seeding.
\newblock In {\em Proceedings of the eighteenth annual ACM-SIAM symposium on
  Discrete algorithms, 2007}, pages 1027--1035, 2007.

\bibitem{Bard1998}
J.~F. Bard.
\newblock {\em Practical Bilevel Optimization: Algorithms and Applications}.
\newblock Springer, 1998.

\bibitem{Bian2017}
A.~A. Bian, J.~M. Buhmann, A.~Krause, and S.~Tschiatschek.
\newblock Guarantees for greedy maximization of non-submodular functions with
  applications.
\newblock In D.~Precup and Y.~W. Teh, editors, {\em Proceedings of the 34th
  International Conference on Machine Learning}, volume~70 of {\em Proceedings
  of Machine Learning Research}, pages 498--507, International Convention
  Centre, Sydney, Australia, 06--11 Aug 2017. PMLR.

\bibitem{campbell2019automated}
T.~Campbell and T.~Broderick.
\newblock Automated scalable bayesian inference via hilbert coresets.
\newblock {\em The Journal of Machine Learning Research}, 20(1):551--588, 2019.

\bibitem{Chaloner1995}
K.~Chaloner and I.~Verdinelli.
\newblock Bayesian experimental design: A review.
\newblock {\em Statist. Sci.}, 10(3):273--304, 08 1995.

\bibitem{chaudhry2019continual}
A.~Chaudhry, M.~Rohrbach, M.~Elhoseiny, T.~Ajanthan, P.~K. Dokania, P.~H. Torr,
  and M.~Ranzato.
\newblock Continual learning with tiny episodic memories.
\newblock {\em arXiv preprint arXiv:1902.10486}, 2019.

\bibitem{chazelle1996linear}
B.~Chazelle and J.~Matou{\v{s}}ek.
\newblock On linear-time deterministic algorithms for optimization problems in
  fixed dimension.
\newblock {\em Journal of Algorithms}, 21(3):579--597, 1996.

\bibitem{chrysakis2020online}
A.~Chrysakis and M.-F. Moens.
\newblock Online continual learning from imbalanced data.
\newblock {\em Proceedings of Machine Learning Research}, 2020.

\bibitem{coleman2020selection}
C.~Coleman, C.~Yeh, S.~Mussmann, B.~Mirzasoleiman, P.~Bailis, P.~Liang,
  J.~Leskovec, and M.~Zaharia.
\newblock Selection via proxy: Efficient data selection for deep learning.
\newblock In {\em International Conference on Learning Representations}, 2020.

\bibitem{cook1980characterizations}
R.~D. Cook and S.~Weisberg.
\newblock Characterizations of an empirical influence function for detecting
  influential cases in regression.
\newblock {\em Technometrics}, 22(4):495--508, 1980.

\bibitem{cook1982residuals}
R.~D. Cook and S.~Weisberg.
\newblock {\em Residuals and influence in regression}.
\newblock New York: Chapman and Hall, 1982.

\bibitem{Das2011}
A.~Das and D.~Kempe.
\newblock Submodular meets spectral: greedy algorithms for subset selection,
  sparse approximation and dictionary selection.
\newblock In {\em Proceedings of the 28th International Conference on
  International Conference on Machine Learning}, pages 1057--1064, 2011.

\bibitem{fanello2013icub}
S.~Fanello, C.~Ciliberto, M.~Santoro, L.~Natale, G.~Metta, L.~Rosasco, and
  F.~Odone.
\newblock icub world: Friendly robots help building good vision data-sets.
\newblock In {\em Proceedings of the IEEE Conference on Computer Vision and
  Pattern Recognition Workshops}, pages 700--705, 2013.

\bibitem{farquhar2018towards}
S.~Farquhar and Y.~Gal.
\newblock Towards robust evaluations of continual learning.
\newblock {\em arXiv preprint arXiv:1805.09733}, 2018.

\bibitem{Fedorov1972}
V.~V. Fedorov.
\newblock {\em Theory of optimal experiments}.
\newblock Probability and mathematical statistics. Academic Press, New York,
  NY, USA, 1972.

\bibitem{feldman2011unified}
D.~Feldman and M.~Langberg.
\newblock A unified framework for approximating and clustering data.
\newblock In {\em Proceedings of the forty-third annual ACM symposium on Theory
  of computing}, pages 569--578. ACM, 2011.

\bibitem{finn2017model}
C.~Finn, P.~Abbeel, and S.~Levine.
\newblock Model-agnostic meta-learning for fast adaptation of deep networks.
\newblock In {\em Proceedings of the 34th International Conference on Machine
  Learning-Volume 70}, pages 1126--1135. JMLR. org, 2017.

\bibitem{franceschi2018bilevel}
L.~Franceschi, P.~Frasconi, S.~Salzo, R.~Grazzi, and M.~Pontil.
\newblock Bilevel programming for hyperparameter optimization and
  meta-learning.
\newblock {\em arXiv preprint arXiv:1806.04910}, 2018.

\bibitem{Frank1956}
M.~Frank and P.~Wolfe.
\newblock An algorithm for quadratic programming.
\newblock {\em Naval research logistics quarterly}, 3(1-2):95--110, 1956.

\bibitem{french1999catastrophic}
R.~M. French.
\newblock Catastrophic forgetting in connectionist networks.
\newblock {\em Trends in cognitive sciences}, 3(4):128--135, 1999.

\bibitem{Ghadimi2018}
S.~Ghadimi and W.~Mengdi.
\newblock Approximation methods for bilevel programming.
\newblock {\em arXiv:1802.02246}, 2018.

\bibitem{goodfellow2013empirical}
I.~J. Goodfellow, M.~Mirza, D.~Xiao, A.~Courville, and Y.~Bengio.
\newblock An empirical investigation of catastrophic forgetting in
  gradient-based neural networks, 2014.

\bibitem{Harshaw2019}
C.~Harshaw, M.~Feldman, J.~Ward, and A.~Karbasi.
\newblock Submodular maximization beyond non-negativity: Guarantees, fast
  algorithms, and applications.
\newblock In K.~Chaudhuri and R.~Salakhutdinov, editors, {\em Proceedings of
  the 36th International Conference on Machine Learning}, volume~97 of {\em
  Proceedings of Machine Learning Research}, pages 2634--2643, Long Beach,
  California, USA, 09--15 Jun 2019. PMLR.

\bibitem{hayes2019memory}
T.~L. Hayes, N.~D. Cahill, and C.~Kanan.
\newblock Memory efficient experience replay for streaming learning.
\newblock In {\em International Conference on Robotics and Automation (ICRA)}.
  IEEE, 2019.

\bibitem{he2016deep}
K.~He, X.~Zhang, S.~Ren, and J.~Sun.
\newblock Deep residual learning for image recognition.
\newblock In {\em Proceedings of the IEEE conference on computer vision and
  pattern recognition}, pages 770--778, 2016.

\bibitem{huggins2016coresets}
J.~Huggins, T.~Campbell, and T.~Broderick.
\newblock Coresets for scalable bayesian logistic regression.
\newblock In {\em Advances in Neural Information Processing Systems}, pages
  4080--4088, 2016.

\bibitem{jacot2018neural}
A.~Jacot, F.~Gabriel, and C.~Hongler.
\newblock Neural tangent kernel: Convergence and generalization in neural
  networks.
\newblock In {\em Advances in neural information processing systems}, pages
  8571--8580, 2018.

\bibitem{DBLP:journals/corr/KingmaB14}
D.~P. Kingma and J.~Ba.
\newblock Adam: {A} method for stochastic optimization.
\newblock In {\em 3rd International Conference on Learning Representations,
  {ICLR} 2015, San Diego, CA, USA, May 7-9, 2015, Conference Track
  Proceedings}, 2015.

\bibitem{kirkpatrick2017overcoming}
J.~Kirkpatrick, R.~Pascanu, N.~Rabinowitz, J.~Veness, G.~Desjardins, A.~A.
  Rusu, K.~Milan, J.~Quan, T.~Ramalho, A.~Grabska-Barwinska, et~al.
\newblock Overcoming catastrophic forgetting in neural networks.
\newblock {\em Proceedings of the national academy of sciences},
  114(13):3521--3526, 2017.

\bibitem{kirsch2019batchbald}
A.~Kirsch, J.~van Amersfoort, and Y.~Gal.
\newblock Batchbald: Efficient and diverse batch acquisition for deep bayesian
  active learning.
\newblock In H.~Wallach, H.~Larochelle, A.~Beygelzimer, F.~d~Alch\'{e}-Buc,
  E.~Fox, and R.~Garnett, editors, {\em Advances in Neural Information
  Processing Systems 32}, pages 7024--7035. Curran Associates, Inc., 2019.

\bibitem{koh2017understanding}
P.~W. Koh and P.~Liang.
\newblock Understanding black-box predictions via influence functions.
\newblock In {\em Proceedings of the 34th International Conference on Machine
  Learning-Volume 70}, pages 1885--1894. JMLR. org, 2017.

\bibitem{krizhevsky2009learning}
A.~Krizhevsky et~al.
\newblock Learning multiple layers of features from tiny images, 2009.

\bibitem{liu1989limited}
D.~C. Liu and J.~Nocedal.
\newblock On the limited memory bfgs method for large scale optimization.
\newblock {\em Mathematical programming}, 45(1-3):503--528, 1989.

\bibitem{liu2018darts}
H.~Liu, K.~Simonyan, and Y.~Yang.
\newblock {DARTS}: Differentiable architecture search.
\newblock In {\em International Conference on Learning Representations}, 2019.

\bibitem{Locatello2017a}
F.~Locatello, M.~Tschannen, G.~R{\"a}tsch, and M.~Jaggi.
\newblock Greedy algorithms for cone constrained optimization with convergence
  guarantees.
\newblock In {\em Advances in Neural Information Processing Systems}, pages
  773--784, 2017.

\bibitem{lopez2017gradient}
D.~Lopez-Paz and M.~Ranzato.
\newblock Gradient episodic memory for continual learning.
\newblock In {\em Advances in Neural Information Processing Systems}, pages
  6467--6476, 2017.

\bibitem{lorraine2019optimizing}
J.~Lorraine, P.~Vicol, and D.~Duvenaud.
\newblock Optimizing millions of hyperparameters by implicit differentiation.
\newblock In {\em International Conference on Artificial Intelligence and
  Statistics}, pages 1540--1552. PMLR, 2020.

\bibitem{lucic2017training}
M.~Lucic, M.~Faulkner, A.~Krause, and D.~Feldman.
\newblock Training gaussian mixture models at scale via coresets.
\newblock {\em The Journal of Machine Learning Research}, 18(1):5885--5909,
  2017.

\bibitem{mccloskey1989catastrophic}
M.~McCloskey and N.~J. Cohen.
\newblock Catastrophic interference in connectionist networks: The sequential
  learning problem.
\newblock In {\em Psychology of learning and motivation}, volume~24, pages
  109--165. Elsevier, 1989.

\bibitem{v.2018variational}
C.~V. Nguyen, Y.~Li, T.~D. Bui, and R.~E. Turner.
\newblock Variational continual learning.
\newblock In {\em International Conference on Learning Representations}, 2018.

\bibitem{neuraltangents2020}
R.~Novak, L.~Xiao, J.~Hron, J.~Lee, A.~A. Alemi, J.~Sohl-Dickstein, and S.~S.
  Schoenholz.
\newblock Neural tangents: Fast and easy infinite neural networks in python.
\newblock In {\em International Conference on Learning Representations}, 2020.

\bibitem{pedregosa2016hyperparameter}
F.~Pedregosa.
\newblock Hyperparameter optimization with approximate gradient.
\newblock In {\em International Conference on Machine Learning}, pages
  737--746, 2016.

\bibitem{rebuffi2017icarl}
S.-A. Rebuffi, A.~Kolesnikov, G.~Sperl, and C.~H. Lampert.
\newblock icarl: Incremental classifier and representation learning.
\newblock In {\em Proceedings of the IEEE conference on Computer Vision and
  Pattern Recognition}, pages 2001--2010, 2017.

\bibitem{ren2018learning}
M.~Ren, W.~Zeng, B.~Yang, and R.~Urtasun.
\newblock Learning to reweight examples for robust deep learning.
\newblock In {\em International Conference on Machine Learning}, pages
  4331--4340, 2018.

\bibitem{rusu2016progressive}
A.~A. Rusu, N.~C. Rabinowitz, G.~Desjardins, H.~Soyer, J.~Kirkpatrick,
  K.~Kavukcuoglu, R.~Pascanu, and R.~Hadsell.
\newblock Progressive neural networks.
\newblock {\em arXiv preprint arXiv:1606.04671}, 2016.

\bibitem{sener2018active}
O.~Sener and S.~Savarese.
\newblock Active learning for convolutional neural networks: A core-set
  approach.
\newblock In {\em International Conference on Learning Representations}, 2018.

\bibitem{shin2017continual}
H.~Shin, J.~K. Lee, J.~Kim, and J.~Kim.
\newblock Continual learning with deep generative replay.
\newblock In {\em Advances in Neural Information Processing Systems}, pages
  2990--2999, 2017.

\bibitem{tapia2019}
J.~Tapia, E.~Knoop, M.~Mutn{\`y}, M.~A. Otaduy, and M.~B{\"a}cher.
\newblock Makesense: Automated sensor design for proprioceptive soft robots.
\newblock {\em Soft robotics}, 7(3):332--345, 2020.

\bibitem{tibshirani1996regression}
R.~Tibshirani.
\newblock Regression shrinkage and selection via the lasso.
\newblock {\em Journal of the Royal Statistical Society: Series B
  (Methodological)}, 58(1):267--288, 1996.

\bibitem{titsias2020functional}
M.~K. Titsias, J.~Schwarz, A.~G. de~G.~Matthews, R.~Pascanu, and Y.~W. Teh.
\newblock Functional regularisation for continual learning with gaussian
  processes.
\newblock In {\em International Conference on Learning Representations}, 2020.

\bibitem{vicente1994bilevel}
L.~N. Vicente and P.~H. Calamai.
\newblock Bilevel and multilevel programming: A bibliography review.
\newblock {\em Journal of Global optimization}, 5(3):291--306, 1994.

\bibitem{vitter1985random}
J.~S. Vitter.
\newblock Random sampling with a reservoir.
\newblock {\em ACM Transactions on Mathematical Software (TOMS)}, 11(1):37--57,
  1985.

\bibitem{pmlr-v37-wei15}
K.~Wei, R.~Iyer, and J.~Bilmes.
\newblock Submodularity in data subset selection and active learning.
\newblock In F.~Bach and D.~Blei, editors, {\em Proceedings of the 32nd
  International Conference on Machine Learning}, volume~37 of {\em Proceedings
  of Machine Learning Research}, pages 1954--1963, Lille, France, 07--09 Jul
  2015. PMLR.

\bibitem{pmlr-v70-zenke17a}
F.~Zenke, B.~Poole, and S.~Ganguli.
\newblock Continual learning through synaptic intelligence.
\newblock In D.~Precup and Y.~W. Teh, editors, {\em Proceedings of the 34th
  International Conference on Machine Learning}, volume~70 of {\em Proceedings
  of Machine Learning Research}, pages 3987--3995, International Convention
  Centre, Sydney, Australia, 06--11 Aug 2017. PMLR.

\end{thebibliography}

\newpage
\onecolumn
\appendix
\begin{center}
    \large{\textbf{Supplementary Material}}
\end{center}
\section{Connections to Experimental Design} \label{sec:appendix-exp-design}
In this section, the weights are assumed to be binary, i.e., $w \in \{0,1\}^n$. We will use a shorthand $X_S$ for matrix where only rows of X whose indices are in $S \subset [n]$ are selected. This will be equivalent to selection done via diagonal matrix $D(w)$, where $i \in S$ corresponds to $w_i=1$ and zero otherwise.

Additionally, let $\hat{\theta}$ be a minimizer of the following loss,
\begin{equation}\label{eq:inner}
\hat{\theta} = \argmin_\theta \sumin w_i (x_i^\top \theta -y_i)^2+\lambda\sigma^2||\theta||_2^2
\end{equation}
which has the following closed form,
\begin{equation}\label{eq:closed-form}
    \hat{\theta}_S = (X_S^\top X_S + \lambda \sigma^2 I)^{-1}X_S^\top y_S.
\end{equation}

\paragraph{Frequentist Experimental Design.}
Under the assumption that the data follows a linear model $y=X \theta + \epsilon$, where $\epsilon \sim \mathcal{N}(0,\sigma^2)$, we can show that the bilevel coreset framework instantiated with the inner objective \eqref{eq:inner} and $\lambda = 0$, with various choices of outer objectives is related to frequentist optimal experimental design problems. The following propositions show how different outer objectives give rise to different experimental design objectives.

\begin{proposition}[A-experimental design]\label{prop:A} Under the linear regression assumptions and when $g(\hat{\theta})=\frac{1}{2}\Expect_\epsilon\left[\norm{\theta - \hat{\theta}}^2_2\right]$, with the inner objective is equal to $\eqref{eq:inner}$ with $\lambda = 0$, the objective simplifies, 
\[ G(w) = \frac{\sigma^2}{2} \trace((X^\top D(w)X)^{-1}). \]
\end{proposition}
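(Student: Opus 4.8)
The plan is to substitute the closed form of the inner solution into the outer objective and carry out the expectation over the noise directly. With $\lambda = 0$ the minimizer in \eqref{eq:closed-form} reads $\hat{\theta}_S = (X_S^\top X_S)^{-1} X_S^\top y_S$, where $S = \mathrm{supp}(w)$; implicitly we assume $X_S^\top X_S$ is invertible (the selected rows span $\reals^d$), which is the only nondegeneracy condition required. Plugging the generative model $y_S = X_S \theta + \epsilon_S$ into this expression gives $\hat{\theta}_S - \theta = (X_S^\top X_S)^{-1} X_S^\top \epsilon_S$, i.e.\ the estimation error is a linear image of the noise vector and the estimator is unbiased.

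Next I would expand the squared norm as $\norm{\hat{\theta}_S - \theta}_2^2 = \epsilon_S^\top X_S (X_S^\top X_S)^{-2} X_S^\top \epsilon_S$, take $\Expect_\epsilon$, and use $\Expect_\epsilon[\epsilon_S \epsilon_S^\top] = \sigma^2 I$ together with the cyclic property of the trace: $\Expect_\epsilon[\epsilon_S^\top M \epsilon_S] = \sigma^2 \trace(M)$ with $M = X_S (X_S^\top X_S)^{-2} X_S^\top$, and $\trace(X_S (X_S^\top X_S)^{-2} X_S^\top) = \trace((X_S^\top X_S)^{-1})$. Hence $\Expect_\epsilon[\norm{\hat{\theta}_S - \theta}_2^2] = \sigma^2 \trace((X_S^\top X_S)^{-1})$, and multiplying by $\tfrac12$ yields $g(\hat{\theta}) = \tfrac{\sigma^2}{2}\trace((X_S^\top X_S)^{-1})$.

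Finally I would translate back to the weight vector: for binary $w$ the Gram matrix of the selected rows equals $X_S^\top X_S = X^\top D(w) X$ (as fixed in the setup of the appendix), so $G(w) = g(\hat{\theta}(w)) = \tfrac{\sigma^2}{2}\trace((X^\top D(w) X)^{-1})$, which is exactly the classical A-optimal design criterion.

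There is essentially no deep obstacle here; the points needing a little care are (i) ensuring $(X_S^\top X_S)^{-1}$ is well-defined, which should be stated as a standing assumption on $w$; (ii) the trace manipulation, where one must note that $X_S (X_S^\top X_S)^{-2} X_S^\top$ is $|S| \times |S|$ whereas $(X_S^\top X_S)^{-1}$ is $d \times d$, so the cyclic reduction genuinely changes the size of the matrix whose trace is taken; and (iii) recalling that $\theta$ is treated here as a fixed (frequentist) parameter, so the expectation is only over $\epsilon$ and the bias term vanishes by unbiasedness — this is why the final answer depends on $w$ only through the Gram matrix and not on $\theta$ or $y$.
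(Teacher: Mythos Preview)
Your proposal is correct and follows essentially the same route as the paper: substitute the closed-form least-squares solution, use $y_S = X_S\theta + \epsilon_S$ to write $\hat{\theta}-\theta$ as a linear image of $\epsilon_S$, then take the expectation via the trace-cyclic identity and finally rewrite $X_S^\top X_S = X^\top D(w)X$. Your additional remarks on the invertibility of $X_S^\top X_S$ and on the change of matrix dimensions in the trace step are helpful clarifications that the paper leaves implicit.
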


\begin{proof}
    Using the closed from in \eqref{eq:closed-form}, and model assumptions, we see that $\hat{\theta} = \theta + (X_S^\top X_S)^{-1}X_S^\top \epsilon_S$. Plugging this in to the  outer objective,
    \begin{eqnarray}
        g(\hat{\theta}) & = & \frac{1}{2}\Expect_\epsilon\left[\norm{\theta - \hat{\theta}}^2_2\right]\\
        & = &\frac{1}{2} \Expect_\epsilon\left[ \norm{(X_S^\top X_S)^{-1}X_S^\top\epsilon_S}_2^2\right] \\
        & = & \frac{1}{2} \Expect_\epsilon\left[ \trace\left( \epsilon_S^\top X_S (X_S^\top X_S)^{-2}X_S^\top\epsilon_S    \right) \right] \\
        & = & \frac{\sigma^2}{2} \trace\left( (X_S^\top X_S)^{-1} \right) \\
        & = & \frac{\sigma^2}{2} \trace\left( (X^\top D(w) X)^{-1} \right)
    \end{eqnarray}
    where in the third line we used the cyclic property of trace and subsequently the normality of $\epsilon$.
\end{proof}

\begin{proposition}[V-experimental design]\label{prop:V} Under the linear regression assumptions and when $g(\hat{\theta})=\frac{1}{2n}\Expect_\epsilon\left[\norm{X\theta - X\hat{\theta}}^2_2\right]$ and the inner objective is equal to $\eqref{eq:inner}$ with $\lambda = 0$, the objective simplifies, 
\[ G(w) = \frac{\sigma^2}{2n} \trace(X(X^\top D(w)X)^{-1}X^\top). \]
\end{proposition}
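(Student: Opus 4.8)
The plan is to mirror the argument already used for Proposition~\ref{prop:A} (A-experimental design), substituting the linear model into the closed form of the inner minimizer and then collapsing the resulting expectation with the cyclic invariance of the trace. Concretely, I would first invoke \eqref{eq:closed-form} with $\lambda = 0$ to write $\hat{\theta}_S = (X_S^\top X_S)^{-1} X_S^\top y_S$, and then plug in the model assumption $y_S = X_S\theta + \epsilon_S$ to obtain $\hat{\theta}_S = \theta + (X_S^\top X_S)^{-1} X_S^\top \epsilon_S$, exactly as in the proof of Proposition~\ref{prop:A}. This immediately gives $X\theta - X\hat{\theta}_S = -X(X_S^\top X_S)^{-1} X_S^\top \epsilon_S$, reducing the outer objective to a quadratic form in the Gaussian noise $\epsilon_S$.

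Next I would expand the squared norm as a trace: $\norm{X(X_S^\top X_S)^{-1} X_S^\top \epsilon_S}_2^2 = \trace\!\big(X_S (X_S^\top X_S)^{-1} X^\top X (X_S^\top X_S)^{-1} X_S^\top\, \epsilon_S \epsilon_S^\top\big)$, take the expectation inside using linearity, and apply $\Expect_\epsilon[\epsilon_S \epsilon_S^\top] = \sigma^2 I$ to get $\Expect_\epsilon\big[\norm{X\theta - X\hat{\theta}_S}_2^2\big] = \sigma^2\, \trace\!\big(X_S (X_S^\top X_S)^{-1} X^\top X (X_S^\top X_S)^{-1} X_S^\top\big)$. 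The key simplification step is then the cyclic property of the trace: moving $X_S^\top X_S$ to the front cancels one factor of $(X_S^\top X_S)^{-1}$, leaving $\sigma^2 \trace\!\big((X_S^\top X_S)^{-1} X^\top X\big) = \sigma^2 \trace\!\big(X (X_S^\top X_S)^{-1} X^\top\big)$. Finally, identifying $X_S^\top X_S = X^\top D(w) X$ and dividing by $2n$ yields $G(w) = \frac{\sigma^2}{2n}\trace\!\big(X(X^\top D(w) X)^{-1} X^\top\big)$, as claimed.

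There is no genuine obstacle here beyond bookkeeping; the only point requiring a word of care is that $X_S^\top X_S$ must be invertible for the closed form \eqref{eq:closed-form} with $\lambda = 0$ to make sense, which is implicitly assumed in this part of the appendix (equivalently, one works on the subset of $w \in \{0,1\}^n$ whose selected rows span $\reals^d$; alternatively the formulas hold verbatim with $\lambda \to 0^+$). I would state this assumption explicitly and otherwise present the computation as a short displayed chain of equalities, exactly in the style of the proof of Proposition~\ref{prop:A}, with the trace-cyclicity and the normality of $\epsilon$ flagged as the two facts being used.
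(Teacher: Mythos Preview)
Your proposal is correct and follows essentially the same route as the paper: substitute the closed-form $\hat{\theta}_S = \theta + (X_S^\top X_S)^{-1}X_S^\top\epsilon_S$, rewrite the squared norm as a trace, use $\Expect_\epsilon[\epsilon_S\epsilon_S^\top]=\sigma^2 I$, and collapse via trace cyclicity before switching to the $D(w)$ notation. The only cosmetic difference is that the paper writes the quadratic form as $\trace(\epsilon_S^\top(\cdots)\epsilon_S)$ whereas you place $\epsilon_S\epsilon_S^\top$ at the end; your added remark on the invertibility of $X_S^\top X_S$ is a welcome clarification the paper leaves implicit.
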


\begin{proof}
    Using the closed from in \eqref{eq:closed-form}, and model assumptions, we see that $\hat{\theta} = \theta + (X_S^\top X_S)^{-1}X_S^\top\epsilon_S$. Plugging this in to the  outer objective $g(\hat{\theta})$,
    \begin{eqnarray}
       G(w) & = & \frac{1}{2n}\Expect_\epsilon\left[\norm{X\theta - X\hat{\theta}}^2_2\right]\\
        & = &\frac{1}{2n} \Expect_\epsilon\left[ \norm{X(X_S^\top X_S)^{-1}X_S^\top\epsilon_S}_2^2\right] \\
        & = & \frac{1}{2n} \Expect_\epsilon\left[ \trace\left( \epsilon_S^\top X_S (X_S^\top X_S)^{-1}X^\top X(X_S^\top X_S)^{-1}X_S^\top\epsilon_S    \right) \right] \\
        & = & \frac{\sigma^2}{2n} \trace\left( X(X_S^\top X_S)^{-1}X^\top \right) \\
        & = & \frac{\sigma^2}{2n} \trace\left( X(X^\top D(w) X)^{-1}X^\top \right)
    \end{eqnarray}
    where in the third line we used the cyclic property of trace and subsequently the normality of $\epsilon$.
\end{proof}

\paragraph{Infinite data limit.}
The following proposition links the data summarization objective and V-experimental design in infinite data limit $n\rightarrow \infty$.

\begin{proposition}[Infinite data limit]\label{prop:cvg} Under the linear regression assumptions, let $g_V$ be 
    \[g_V(\hat{\theta}) = \frac{1}{2n} \Expect_\epsilon\left[\norm{X\theta - X\hat{\theta}}^2_2\right] \]
    the V-experimental design outer objective, and let the summarization objective be,
    \[g(\hat{\theta}) = \frac{1}{2n}\Expect_\epsilon\left[\sum_{i=1}^{n} (x_i^\top \hat{\theta} - y_i)^2\right]. \]
Let $\Theta$ be the set containing  random variables $\hat{\theta}$ representing the inner problem's solution on a finite subset of the data. Thus each $\hat{\theta}$ depends on a finite set of $\epsilon_S:=\cup_{i\in S}\{ e_i\}$, where $S \subset [n]$. Then \[ \lim_{n \rightarrow \infty}  g(\hat{\theta})- g_V(\hat{\theta})-\frac{\sigma^2}{2}=0, \quad \forall \hat{\theta} \in \Theta.\]
\end{proposition}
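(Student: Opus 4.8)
The plan is to expand $g(\hat{\theta})$ using the linear model $y = X\theta + \epsilon$ and compare it term by term with $g_V(\hat{\theta})$, showing the discrepancy converges to $\sigma^2/2$. First I would write $y_i = x_i^\top \theta + \epsilon_i$, so that
\[
g(\hat{\theta}) = \frac{1}{2n}\Expect_\epsilon\left[\sum_{i=1}^n \left(x_i^\top\hat{\theta} - x_i^\top\theta - \epsilon_i\right)^2\right].
\]
Expanding the square gives three groups of terms: $\frac{1}{2n}\Expect_\epsilon[\sum_i (x_i^\top(\hat{\theta}-\theta))^2]$, which is exactly $g_V(\hat{\theta})$; the cross term $-\frac{1}{n}\Expect_\epsilon[\sum_i (x_i^\top(\hat{\theta}-\theta))\epsilon_i]$; and $\frac{1}{2n}\Expect_\epsilon[\sum_i \epsilon_i^2] = \frac{\sigma^2}{2}$. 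So the identity $g(\hat{\theta}) - g_V(\hat{\theta}) - \frac{\sigma^2}{2} = -\frac{1}{n}\Expect_\epsilon[\sum_{i=1}^n (x_i^\top(\hat{\theta}-\theta))\epsilon_i]$ holds exactly for every finite $n$, and it remains to show the right-hand side vanishes as $n \to \infty$.

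Next I would control the cross term using the structure of $\Theta$: each $\hat{\theta}$ depends only on $\epsilon_S = \{\epsilon_i : i \in S\}$ for a \emph{fixed finite} index set $S$. Split the sum $\sum_{i=1}^n (x_i^\top(\hat{\theta}-\theta))\epsilon_i$ into the part with $i \in S$ and the part with $i \notin S$. For $i \notin S$, the random variable $\hat{\theta}$ is independent of $\epsilon_i$, so $\Expect_\epsilon[(x_i^\top(\hat{\theta}-\theta))\epsilon_i] = \Expect_\epsilon[x_i^\top(\hat{\theta}-\theta)]\,\Expect[\epsilon_i] = 0$ since $\epsilon_i$ has mean zero. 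Hence only the finitely many terms with $i \in S$ survive, and
\[
\left| g(\hat{\theta}) - g_V(\hat{\theta}) - \frac{\sigma^2}{2} \right| = \frac{1}{n}\left| \Expect_\epsilon\left[\sum_{i \in S} (x_i^\top(\hat{\theta}-\theta))\epsilon_i\right]\right|.
\]
Since $|S|$ is finite and each expectation in the sum is a fixed finite number (not growing with $n$), dividing by $n$ sends the whole expression to $0$ as $n \to \infty$, which is the claim.

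The main obstacle — really the only subtle point — is making the independence argument for $i \notin S$ fully rigorous, i.e., being careful that the only randomness in $\hat{\theta}$ is through $\epsilon_S$ with $S$ genuinely fixed and independent of $i$, so that the product of expectations factorization is valid and the vanishing of $\Expect[\epsilon_i]$ can be invoked. One should also note implicitly that the surviving $i \in S$ terms are bounded uniformly in $n$; this is automatic here because $S$, the vectors $x_i$ for $i \in S$, and the distribution of $\epsilon_S$ do not depend on $n$ (the ambient dataset is nested as $n$ grows), so each term $\Expect_\epsilon[(x_i^\top(\hat{\theta}-\theta))\epsilon_i]$ is a constant. If one wanted a quantitative rate, a Cauchy–Schwarz bound $|\Expect[(x_i^\top(\hat\theta-\theta))\epsilon_i]| \le \sqrt{\Expect[(x_i^\top(\hat\theta-\theta))^2]\,\sigma^2}$ would give an explicit $O(|S|/n)$ decay, but for the stated limit the elementary argument above suffices.
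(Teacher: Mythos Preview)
Your proposal is correct and follows essentially the same route as the paper: expand using $y_i = x_i^\top\theta + \epsilon_i$, identify the three terms, split the cross term into $i\in S$ and $i\notin S$, kill the latter by independence and mean-zero noise, and let the former vanish under the $1/n$ factor since $|S|$ is fixed. Your added remarks on rigor and the optional Cauchy--Schwarz rate go slightly beyond the paper's presentation but do not change the argument.
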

\begin{proof}
Since $y_i = x_i^\top \theta +\epsilon_i$, we have,
   \begin{eqnarray*}
 g(\hat{\theta}) &=&  \frac{1}{2n}\Expect_\epsilon\left[\sum_{i=1}^{n} (x_i^\top \hat{\theta} -  x_i^\top \theta -\epsilon_i)^2 \right]\\
 &=& \frac{1}{2n} \Expect_\epsilon\left[\norm{X\theta - X\hat{\theta}}^2_2\right]  -  \frac{1}{n} \Expect_\epsilon\left[\epsilon^\top(X\hat{\theta} - X\theta )\right]  +  \frac{1}{2n} \Expect_\epsilon\left[\norm{\epsilon}_2^2\right] \\
 &=& g_V(\hat{\theta}) -  \frac{1}{n} \Expect_\epsilon\left[\epsilon^\top(X\hat{\theta} - X\theta )\right] + \frac{\sigma^2}{2} \\  
  &=& g_V(\hat{\theta}) -  \frac{1}{n} \Expect_\epsilon\left[\sum_{i \in S}\epsilon_i(x_i^\top\hat{\theta} - x_i^\top\theta )\right] -  \frac{1}{n} \Expect_\epsilon\left[\sum_{i \in [n] \setminus S}\epsilon_i(x_i^\top\hat{\theta} - x_i^\top\theta )\right]+ \frac{\sigma^2}{2} \\ 
 \end{eqnarray*}
Under the infinite data limit as $n\rightarrow\infty$, we have $\lim_{n\rightarrow\infty}\frac{1}{n} \Expect_\epsilon\left[\sum_{i \in S}\epsilon_i(x_i^\top\hat{\theta} - x_i^\top\theta )\right] = 0$ since $S$ is a finite set. Since $\hat{\theta}$ only depends on $\epsilon_S$,  the independence of $\hat{\theta}$ and  $\epsilon_i$, $i\in [n]\setminus S$ can be established. Thus,
\begin{equation*}
    \Expect_\epsilon\left[\sum_{i \in [n] \setminus S}\epsilon_i(x_i^\top\hat{\theta} - x_i^\top\theta )\right] = \sum_{i \in [n] \setminus S}\Expect_\epsilon\left[\epsilon_i\right]\Expect_\epsilon\left[x_i^\top\hat{\theta} - x_i^\top\theta \right]=0.
\end{equation*}
    As a consequence, as $\lim_{n \rightarrow \infty}  g(\hat{\theta})-g_V(\hat{\theta}) -\frac{\sigma^2}{2}=0$ for all $\hat{\theta}\in \Theta$.

\end{proof}

Note that the Proposition \ref{prop:cvg} does not imply that our algorithm performs the same steps when used with $g_V$ instead of $g$. It only means that the optimal solutions of the problems are converging to  selections with the same quality in the infinite data limit. 

\paragraph{Bayesian V-Experimental Design.}
Bayesian experimental designs \cite{Chaloner1995} can be  incorporated as well into our framework. In Bayesian modelling, the ``true'' parameter $\theta$ is not a fixed value, but instead a sample from a prior distribution $p(\theta)$ and hence a random variable.
Consequently, upon taking into account the random nature of the coefficient vector we can find an appropriate inner and outer objectives.

\begin{proposition}\label{prop:Bayes-V} 
Under Bayesian linear regression assumptions and where $\theta \sim \mathcal{N}(0,\lambda^{-1} I)$, let the outer objective  \[g_V(\hat{\theta})=\frac{1}{2n}\Expect_{\epsilon,\theta}\left[\norm{X \theta - X\hat{\theta}}^2_2\right],\] where expectation is over the prior as well. Further, let the inner objective be equal to $\eqref{eq:inner}$ with the same value of $\lambda$, then the overall objective simplifies to

\begin{equation}\label{eq:bayes-V-design}
G(w) = \frac{1}{2n} \trace\left(  X\left(\frac{1}{\sigma^2}X^\top D(w) X + \lambda I\right)^{-1}X^\top \right).
\end{equation}
\end{proposition}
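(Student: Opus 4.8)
The plan is to substitute the closed form \eqref{eq:closed-form} of the inner minimizer directly into the outer objective $g_V$ and evaluate the resulting expectation over both $\epsilon$ and the Gaussian prior on $\theta$. Write $M := X_S^\top X_S + \lambda\sigma^2 I$, so that $\hat\theta_S = M^{-1}X_S^\top y_S$. Plugging in the model $y_S = X_S\theta + \epsilon_S$ gives $\hat\theta_S - \theta = (M^{-1}X_S^\top X_S - I)\theta + M^{-1}X_S^\top \epsilon_S$, and the first coefficient simplifies via $M^{-1}X_S^\top X_S - I = M^{-1}(X_S^\top X_S - M) = -\lambda\sigma^2 M^{-1}$. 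This yields the clean decomposition $X(\hat\theta_S - \theta) = -\lambda\sigma^2\, X M^{-1}\theta + X M^{-1}X_S^\top \epsilon_S$ into a prior-driven ``bias'' term and a noise-driven ``variance'' term.

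Next I would take $\Expect_{\epsilon,\theta}\norm{\cdot}_2^2$. Since $\theta$ and $\epsilon$ are independent and zero-mean, the cross term vanishes and the expectation splits into two pieces. Using $\Expect_\theta[\theta\theta^\top] = \lambda^{-1}I$, the cyclic property of the trace, and symmetry of $M$, the bias term contributes $\lambda^2\sigma^4\cdot\lambda^{-1}\trace(X M^{-2}X^\top) = \lambda\sigma^4\trace(X M^{-2}X^\top)$; using $\Expect_\epsilon[\epsilon_S\epsilon_S^\top] = \sigma^2 I$, the variance term contributes $\sigma^2\trace(X M^{-1}X_S^\top X_S M^{-1}X^\top)$.

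The step that makes everything collapse is to again write $X_S^\top X_S = M - \lambda\sigma^2 I$, now inside the variance term: this turns $\sigma^2\trace(X M^{-1}X_S^\top X_S M^{-1}X^\top)$ into $\sigma^2\trace(X M^{-1}X^\top) - \lambda\sigma^4\trace(X M^{-2}X^\top)$, whose second summand exactly cancels the bias term. Hence $\Expect_{\epsilon,\theta}\norm{X(\hat\theta_S-\theta)}_2^2 = \sigma^2\trace\big(X(X_S^\top X_S + \lambda\sigma^2 I)^{-1}X^\top\big)$; dividing by $2n$, absorbing the $\sigma^2$ into the inverse, and recalling $X_S^\top X_S = X^\top D(w) X$ for binary $w$ gives the claimed formula \eqref{eq:bayes-V-design}. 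Equivalently, one can shortcut the computation by observing that under these Gaussian assumptions $\hat\theta_S$ is exactly the posterior mean of $\theta$, so its expected squared estimation error equals the trace of the posterior covariance $\big(\tfrac{1}{\sigma^2}X_S^\top X_S + \lambda I\big)^{-1}$ by the orthogonality principle — the direct calculation above being a proof of that fact in this special case. I do not expect a real obstacle here; the only care needed is the bookkeeping that makes the two $\trace(X M^{-2}X^\top)$ contributions cancel, and keeping straight the $\sigma^2$ versus $\lambda\sigma^2$ scaling between the inner regularizer in \eqref{eq:inner} and the prior precision $\lambda$.
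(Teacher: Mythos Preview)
Your proposal is correct and follows essentially the same route as the paper: both substitute the closed form \eqref{eq:closed-form}, use $M^{-1}X_S^\top X_S - I = -\lambda\sigma^2 M^{-1}$ to split into independent $\theta$- and $\epsilon$-driven terms, and then exploit $X_S^\top X_S + \lambda\sigma^2 I = M$ to collapse the sum to $\sigma^2\trace(XM^{-1}X^\top)$. The only cosmetic difference is that the paper factors the two trace terms as $\trace\big(M^{-1}X^\top X M^{-1}(\lambda\sigma^2 I + X_S^\top X_S)\big)$ before cancelling, whereas you expand the variance term and cancel the $\trace(XM^{-2}X^\top)$ pieces directly; your added posterior-covariance remark is a nice bonus not present in the paper.
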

\begin{proof}
    Using the closed from in \eqref{eq:closed-form}, and model assumptions, we see that  $\hat{\theta} = (X_S^\top X_S + \lambda \sigma^2 I)^{-1}X_S^\top(X_S\theta + \epsilon_S)$. Plugging this in to the outer objective $g_V(\hat{\theta})$,
    
    \begin{eqnarray*}
     G(w) & = & \frac{1}{2n} \Expect_{\epsilon,\theta}\left[ \norm{ X\theta - X\hat{\theta} }_2^2 \right] \\
    & = & \frac{1}{2n} \Expect_{\epsilon,\theta}\left[ \norm{X((X_S^\top X_S + \lambda 
    \sigma^2 I)^{-1}X_S^\top(X_S\theta + \epsilon_S)-\theta)}_2^2\right]\\
    & = & \frac{1}{2n} \Expect_{\epsilon,\theta}\left[\norm{X(X_S^\top X_S + \lambda \sigma^2 I)^{-1}X_S^\top\epsilon_S -\sigma^2\lambda X(X_S^\top X_S + \lambda\sigma^2 I)^{-1}\theta }_2^2\right] \\
    & = & \frac{1}{2n} \Expect_{\theta}\left[\norm{\lambda\sigma^2 X(X_S^\top X_S  + \lambda\sigma^2 I)^{-1}\theta}_2^2\right] + \frac{1}{2n} \Expect_\epsilon\left[
    \norm{X(X_S^\top X_S + \lambda \sigma^2 I)^{-1}X_S^\top\epsilon_S}_2^2\right] \\
    & = & \frac{\sigma^2}{2n}
    \trace\left(\lambda\sigma^2(X_S^\top X_S + \lambda\sigma^2 I )^{-1} X^\top X(X_S^\top X_S + \lambda \sigma^2 I )^{-1} \right)+\\ & &  + \frac{\sigma^2}{2n}\trace(X_S (X_S^\top X_S + \lambda \sigma^2 I)^{-1} X^\top X(X_S^\top X_S + \lambda\sigma^2 I)^{-1}X_S^\top )\\
    & = & \frac{\sigma^2}{2n} \trace \left( (X_S^\top X_S + \lambda  \sigma^2I )^{-1}X^\top X(X_S^\top X_S + \lambda\sigma^2 I)^{-1} \left( \lambda \sigma^2 I + X_S^\top X_S  \right)
    \right)\\
    & = &  \frac{\sigma^2}{2n} \trace\left(  X(X_S^\top X_S + \lambda \sigma^2 I)^{-1}X^\top \right) = \frac{\sigma^2}{2n} \trace\left(  X(X^\top D(w) X + \lambda \sigma^2 I)^{-1}X^\top \right)
    \end{eqnarray*}
    
    where we used that $\Expect_\epsilon[\epsilon] = 0$, and cyclic property of the trace, and the final results follows by rearranging.

\end{proof}

Similarly to the case of unregularized frequentist experimental design, in the infinite data limit, even the Bayesian objectives share the same optima. The difference here is that the true parameter is no longer a fixed value and we need to integrate over it using the prior. 

\begin{proposition}[Infinite data limit]\label{prop:cvg-bayes}
Let $g_V$ be 
    \[g_V(\hat{\theta}) = \frac{1}{2n}\Expect_{\epsilon,\theta}\left[\norm{X\theta - X\hat{\theta}}^2_2\right] \]
    the Bayesian V-experimental design outer objective, and let the summarization objective be,
    \[g(\hat{\theta}) = \frac{1}{2n}\Expect_{\epsilon,\theta}\left[\sum_{i=1}^{n} (x_i^\top \hat{\theta} - y_i)^2\right]. \]
Under the same assumptions as in Proposition \ref{prop:cvg} and $\theta \sim \mathcal{N}(0,\lambda^{-1} I)$  \[ \lim_{n \rightarrow \infty}  g(\hat{\theta})- g_V(\hat{\theta})-\frac{\sigma^2}{2}=0, \quad \forall \hat{\theta} \in \Theta\].
\end{proposition}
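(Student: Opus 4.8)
The statement is the Bayesian counterpart of Proposition~\ref{prop:cvg}, and the intended proof is an almost verbatim repetition of that argument, with every expectation now taken over both the noise $\epsilon$ and the prior draw $\theta\sim\mathcal{N}(0,\lambda^{-1}I)$. First I would substitute $y_i = x_i^\top\theta + \epsilon_i$ into the summarization objective and expand the square exactly as in Proposition~\ref{prop:cvg}, obtaining
\[
 g(\hat\theta) \;=\; g_V(\hat\theta) \;-\; \frac{1}{n}\,\Expect_{\epsilon,\theta}\!\left[\epsilon^\top (X\hat\theta - X\theta)\right] \;+\; \frac{1}{2n}\,\Expect_{\epsilon}\!\left[\norm{\epsilon}_2^2\right].
\]
The last term equals $\sigma^2/2$ since $\Expect_\epsilon[\norm{\epsilon}_2^2] = n\sigma^2$ and it does not involve $\theta$. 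Hence it remains to show that the cross term $\frac{1}{n}\Expect_{\epsilon,\theta}[\epsilon^\top(X\hat\theta-X\theta)]$ vanishes as $n\to\infty$.

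As in Proposition~\ref{prop:cvg}, I would split this cross term into a sum over $i\in S$ and a sum over $i\in[n]\setminus S$. For the $i\in S$ part there are only finitely many summands (since $S$ is finite by hypothesis, $\hat\theta\in\Theta$), each of which is bounded under the standing moment assumptions; dividing by $n$ sends this contribution to $0$. For the $i\in[n]\setminus S$ part, the key observation is that in the Bayesian model $\hat\theta$ (computed from $y_S = X_S\theta + \epsilon_S$ via the closed form \eqref{eq:closed-form}) depends only on $\theta$ and on $\epsilon_S$, while $\epsilon_i$ for $i\notin S$ is independent of the pair $(\theta,\epsilon_S)$ — this is exactly where the Bayesian assumption that $\theta\perp\epsilon$ enters. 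Therefore $\epsilon_i$ is independent of both $x_i^\top\hat\theta$ and $x_i^\top\theta$, so each summand factorizes as $\Expect[\epsilon_i]\,\Expect[x_i^\top\hat\theta - x_i^\top\theta] = 0$. Combining the three pieces gives $\lim_{n\to\infty}\bigl(g(\hat\theta) - g_V(\hat\theta) - \tfrac{\sigma^2}{2}\bigr)=0$ for every $\hat\theta\in\Theta$.

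The only point that needs slightly more care than in Proposition~\ref{prop:cvg} — and hence the main (mild) obstacle — is tracking that $\hat\theta$ now depends on the random variable $\theta$ in addition to $\epsilon_S$; one must verify that this extra dependence does not break the factorization used for the $i\notin S$ terms. It does not, because $\epsilon_i$ with $i\notin S$ is independent of $(\theta,\epsilon_S)$ jointly, so it is independent of any measurable function of $(\theta,\epsilon_S)$, in particular of $\hat\theta$ and of $x_i^\top\theta$. Everything else is routine and mirrors the frequentist case, so no genuinely new difficulty arises.
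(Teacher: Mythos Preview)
Your proposal is correct and follows exactly the approach the paper intends: the paper's own proof of this proposition is simply ``The proof follows similarly as in Proposition~\ref{prop:cvg},'' and you have spelled out precisely that argument, including the one genuinely new consideration (that $\hat\theta$ now depends on $\theta$ as well as $\epsilon_S$, which is harmless since $\epsilon_i$ for $i\notin S$ is independent of $(\theta,\epsilon_S)$ jointly).
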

\begin{proof}
The proof follows similarly as in Proposition \ref{prop:cvg}.
\end{proof}

\paragraph{Weak-submodularity of Bayesian V-experimental design.}

The greedy algorithm is known to perform well on A-experimental design due to the function $G(w)$ being weakly-submodular and monotone \cite{Bian2017}. Cardinality-constrained maximization of such problems with greedy algorithm is known to find a solution which  $1-e^{-\gamma}$ \cite{Das2011} approximation to the optimal subset. The parameter $\gamma$ is known as weak submodularity ratio.

Following the analysis of \cite{Harshaw2019}, which derives the weak submodularity ratio of A-experimental design, we show that V-experimental design surprisingly has the same submodularity ratio as well, and hence we expect greedy strategy to perform well. We follow slightly different nomenclature than \cite{Harshaw2019}.

\begin{proposition}\label{prop:weak-sub}
The function $R(w) = G(0) - G(w)$ as in \eqref{eq:bayes-V-design} is non-negative, monotone and $\gamma$-weakly submodular function with 
\[ \gamma = \left(1+s^2\frac{1}{\sigma^2\lambda}\right)^{-1}\]
where $s = \max_{i \in \mD} \norm{x_i}_2$.
\end{proposition}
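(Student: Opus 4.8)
The plan is to establish the three properties of $R(w) = G(0) - G(w)$ in turn — non-negativity, monotonicity, and the weak submodularity ratio bound — working with the Bayesian V-design objective $G(w) = \frac{1}{2n}\trace\big(X(\frac{1}{\sigma^2}X^\top D(w)X + \lambda I)^{-1}X^\top\big)$ from \eqref{eq:bayes-V-design}. Throughout, I will write $A(w) := \frac{1}{\sigma^2}X^\top D(w)X + \lambda I$, so that $2nG(w) = \trace(X A(w)^{-1} X^\top)$ and $A(w) \succeq \lambda I \succ 0$ for every $w \in \{0,1\}^n$, which makes all inverses well-defined. Non-negativity and monotonicity of $R$ are immediate from the fact that adding an index $i$ to the support replaces $A(w)$ by $A(w) + \frac{1}{\sigma^2}x_i x_i^\top \succeq A(w)$; since $B \mapsto B^{-1}$ is operator-antitone on positive definite matrices and $B \mapsto \trace(XBX^\top)$ is monotone in the Loewner order, $G$ is non-increasing in each coordinate, hence $R(w) = G(0) - G(w) \ge 0$ and $R$ is monotone.

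For the weak submodularity ratio, I would follow the Harshaw et al.\ template adapted to V-design. The key quantity is, for a set $S$ and a single element $i \notin S$, the marginal gain $R(S \cup \{i\}) - R(S) = G(S) - G(S \cup \{i\})$. Using the Sherman–Morrison formula on $A(S \cup \{i\})^{-1} = (A(S) + \frac{1}{\sigma^2}x_i x_i^\top)^{-1}$, this marginal gain evaluates in closed form to
\begin{equation*}
R(S \cup \{i\}) - R(S) = \frac{1}{2n}\cdot\frac{\frac{1}{\sigma^2}\, x_i^\top A(S)^{-1} X^\top X A(S)^{-1} x_i}{1 + \frac{1}{\sigma^2}\, x_i^\top A(S)^{-1} x_i}.
\end{equation*}
The weak submodularity ratio $\gamma$ is the largest constant such that $\sum_{i \in T \setminus S}\big(R(S\cup\{i\})-R(S)\big) \ge \gamma\,\big(R(S \cup T) - R(S)\big)$ for all $S, T$. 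The numerators sum up nicely — $\sum_{i \in T\setminus S}\frac{1}{\sigma^2}x_i^\top A(S)^{-1}X^\top X A(S)^{-1}x_i$ is, up to the full batch-versus-subset discrepancy, comparable to the exact drop $R(S\cup T)-R(S)$ — so the only loss comes from the denominators $1 + \frac{1}{\sigma^2}x_i^\top A(S)^{-1}x_i$. Bounding $A(S)^{-1} \preceq \frac{1}{\lambda}I$ gives $x_i^\top A(S)^{-1} x_i \le \frac{1}{\lambda}\|x_i\|_2^2 \le \frac{s^2}{\lambda}$, so each denominator is at most $1 + \frac{s^2}{\sigma^2 \lambda}$; factoring this worst-case denominator out of the sum yields the claimed $\gamma = \big(1 + s^2 \frac{1}{\sigma^2\lambda}\big)^{-1}$.

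**The main obstacle** I anticipate is the bookkeeping in relating $\sum_{i \in T\setminus S}\big(R(S\cup\{i\})-R(S)\big)$ to the joint increment $R(S\cup T) - R(S)$ with exactly the right constant — i.e.\ showing that once the denominators are controlled, the numerator terms dominate the true joint drop \emph{without} further loss. Concretely, I would need the matrix inequality
\begin{equation*}
A(S \cup T)^{-1} \succeq A(S)^{-1} - \frac{1}{\sigma^2}\sum_{i \in T\setminus S} A(S)^{-1} x_i x_i^\top A(S)^{-1},
\end{equation*}
which is a multi-rank analogue of the scalar inequality $\frac{1}{a+b} \ge \frac{1}{a} - \frac{b}{a^2}$ and follows from convexity of $B \mapsto B^{-1}$ (or from expanding $(A(S) + \frac{1}{\sigma^2}\sum x_ix_i^\top)^{-1}$ and discarding a positive semidefinite remainder). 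Composing this with $\trace(X(\cdot)X^\top)$ gives $R(S\cup T)-R(S) \le \frac{1}{2n}\sum_{i\in T\setminus S}\frac{1}{\sigma^2}x_i^\top A(S)^{-1}X^\top X A(S)^{-1}x_i = \sum_{i\in T\setminus S}\big(R(S\cup\{i\})-R(S)\big)\cdot\big(1 + \frac{1}{\sigma^2}x_i^\top A(S)^{-1}x_i\big)$, and the uniform denominator bound closes the argument. Everything else — the Sherman–Morrison expansion, the operator-monotonicity facts, the $\|x_i\|_2 \le s$ substitution — is routine; the one genuinely load-bearing step is the multi-rank inverse inequality and the care needed to see that it loses nothing beyond the denominator factor.
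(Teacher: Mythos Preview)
Your proposal is correct and follows essentially the same route as the paper: both compute the single-element marginal gain via Sherman--Morrison, bound the denominator uniformly by $1+s^2/(\sigma^2\lambda)$ using $A(S)^{-1}\preceq \lambda^{-1}I$, and then upper-bound the joint increment $R(S\cup T)-R(S)$ by the sum of the numerators. The only cosmetic difference is in that last step: the paper applies the Woodbury identity to write the joint increment exactly as $\trace\big((\sigma^2 I + X_{T\setminus S}\,A(S)^{-1}X_{T\setminus S}^\top)^{-1}\,X_{T\setminus S}\,A(S)^{-1}X^\top X\,A(S)^{-1}X_{T\setminus S}^\top\big)$ and then bounds $(\sigma^2 I + \cdots)^{-1}\preceq \sigma^{-2}I$, whereas you reach the same inequality directly via the operator bound $(B+C)^{-1}\succeq B^{-1}-B^{-1}CB^{-1}$; these are equivalent, and your ``multi-rank inverse inequality'' is exactly what Woodbury plus the trivial bound delivers.
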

\begin{proof}
We employ exactly the same proof technique as \cite{Harshaw2019} relying on Sherman-Morison identity. 

Let $A$ and $B$ be disjoint sets without loss of generality. Also, let $M_A := I\lambda + \frac{1}{\sigma^2}X_AX_A^\top$, which positive definite by definition. Following the line of proof in \cite{Harshaw2019}, it can be shown that a marginal gain of an element $e \in B$ is equal to
\begin{equation}\label{eq:marginal}
R(e|A) = \frac{x_e^\top M_A^{-1} X  X^\top M_A^{-1}x_e}{\sigma^2 + x_e^\top M_A^{-1}  x_e  }
\end{equation}
In order to derive the weak-submodularity ratio we need to lower bound,

\begin{equation}\label{eq:ratio}
 \frac{\sum_{e \in B} R(e|A)}{R(B \cup A)  - R(A)} .
\end{equation}

Note the observation made by \cite{Harshaw2019} that $\sigma^2 + x_e^\top M_A^{-1}  x_e \leq  \sigma^2 + s^2\lambda^{-1}$ in their Equation 13.

\begin{eqnarray}
\sum_{e \in B} R(e|A) & = &  \sum_{e  \in B} \frac{x_e^\top M_A^{-1} X  X^\top M_A^{-1}x_e}{\sigma^2 + x_e^\top M_A^{-1}  x_e  } \\
& = & \sum_{e \in B } \frac{\trace(x_e^\top M_A^{-1}XX^\top M_A^{-1}x_e)  }{\sigma^2 + x_e^\top M_A^{-1}  x_e  } \\
& \stackrel{\text{as above}}\geq & \sum_{e \in B } \frac{\trace(x_e^\top M_A^{-1}XX^\top M_A^{-1}x_e)  }{\sigma^2 + s^2\lambda^{-1}} \\
& = & \frac{\trace(X_B^\top M_A^{-1}XX^\top M_A^{-1}X_B)  }{\sigma^2 + s^2\lambda^{-1}} \label{eq:numer}
\end{eqnarray}

Now the denominator which is a more general version of \eqref{eq:marginal}
\begin{align}
 R(B \cup A) - R(A) &=  \trace \left((\sigma^2I + X_B^\top M_A^{-1}X_B)^{-1} X_B^\top M_A^{-1} XX^\top M_A^{-1}X_B\right) \\
 &\leq \frac{1}{\sigma^2}\trace(X_B^\top M_A^{-1} XX^\top M_A^{-1}X_B) \label{eq:demon}
\end{align}
where we used the fact that the $(\sigma^2I + X_B^\top M_A^{-1}X_B) \succeq \sigma^2 I$. Note that the result follows by plugging  \eqref{eq:demon} and \eqref{eq:numer} to the expression 
\eqref{eq:ratio} and observing that the fraction can be simplified. The simplification finishes the proof. 
\end{proof}

\begin{lemma}\label{lemma:convex}
Assume $\normp{x_i}{2}<L<\infty$ for all $i\in [n]$ and $w\in \reals^n_+$ s.t. $\normp{w}{2}<\infty$. The function \[G(w) = \frac{1}{2n} \trace\left(  X\left(\frac{1}{\sigma^2}X^\top D(w) X + \lambda I\right)^{-1}X^\top \right) \] is convex and smooth in $w$.
\end{lemma}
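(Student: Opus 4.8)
The plan is to exploit that $M(w):=\frac{1}{\sigma^2}X^\top D(w) X+\lambda I$ is \emph{affine} in $w$ and is \emph{uniformly} positive definite on $\reals^n_+$, which makes $G$ a tame composition. Throughout, let $x_i\in\reals^d$ denote the $i$-th row of $X$ and $d$ the number of features. First I would record the positivity fact: for $w\in\reals^n_+$ we have $D(w)\succeq 0$, hence $X^\top D(w) X\succeq 0$ and $M(w)\succeq\lambda I\succ 0$, so $M(w)$ is invertible on all of $\reals^n_+$ (indeed on an open neighborhood of it) and $G$ is well-defined there. Since $w\mapsto M(w)^{-1}$ is a rational — hence $C^\infty$ — matrix-valued map on the open set $\{w:M(w)\succ 0\}$, the function $G$ is $C^\infty$ on $\reals^n_+$; in particular $\nabla G$ and $\nabla^2 G$ exist.

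For convexity I would use the variational identity $x^\top A^{-1}x=\sup_{z\in\reals^d}\big(2x^\top z-z^\top A z\big)$, valid for $A\succ 0$ (the supremum is attained at $z=A^{-1}x$). Writing $\trace\big(XM(w)^{-1}X^\top\big)=\sum_{i=1}^n x_i^\top M(w)^{-1}x_i$ gives
$$G(w)=\frac{1}{2n}\sum_{i=1}^n\ \sup_{z_i\in\reals^d}\Big(2x_i^\top z_i-z_i^\top M(w)\,z_i\Big).$$
For each fixed $z_i$, the map $w\mapsto 2x_i^\top z_i-z_i^\top M(w)z_i=2x_i^\top z_i-\lambda\norm{z_i}_2^2-\frac{1}{\sigma^2}\sum_k w_k (x_k^\top z_i)^2$ is affine in $w$; hence each summand is a pointwise supremum of affine functions, therefore convex, and $G$ is a nonnegative combination of convex functions. (Equivalently one may invoke operator convexity of $A\mapsto A^{-1}$ and compose with the affine map $M(\cdot)$.)

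For smoothness I would bound the Hessian uniformly over $\reals^n_+$. Fix $w\in\reals^n_+$ and a direction $v\in\reals^n$, set $M:=M(w)$ and $N:=\frac{1}{\sigma^2}\sum_k v_k x_k x_k^\top$, and restrict $G$ to the line $\phi(t):=G(w+tv)$, for which $M(w+tv)=M+tN$. Differentiating $(M+tN)^{-1}$ twice yields $\frac{d^2}{dt^2}(M+tN)^{-1}\big|_{t=0}=2M^{-1}NM^{-1}NM^{-1}$, so
$$\phi''(0)=\frac{1}{n}\trace\!\big(XM^{-1}NM^{-1}NM^{-1}X^\top\big)=\frac{1}{n}\norm{XM^{-1}NM^{-1/2}}_F^2\ \ge\ 0,$$
using the factorization $XM^{-1}NM^{-1}NM^{-1}X^\top=(XM^{-1}NM^{-1/2})(XM^{-1}NM^{-1/2})^\top$ (which also re-proves $\nabla^2 G\succeq 0$). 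Now I would bound the factors using $M\succeq\lambda I$: $\norm{M^{-1}}_{\mathrm{op}}\le 1/\lambda$, $\norm{M^{-1/2}}_{\mathrm{op}}\le\lambda^{-1/2}$, $\norm{X}_F^2=\sum_i\norm{x_i}_2^2\le nL^2$, and $\norm{N}_{\mathrm{op}}\le\frac{1}{\sigma^2}\sum_k|v_k|\norm{x_k}_2^2\le\frac{L^2}{\sigma^2}\norm{v}_1\le\frac{L^2\sqrt n}{\sigma^2}\norm{v}_2$. Combining, $\phi''(0)\le\frac{nL^6}{\sigma^4\lambda^3}\norm{v}_2^2$, i.e.\ $0\preceq\nabla^2 G(w)\preceq\beta I$ with $\beta:=\frac{nL^6}{\sigma^4\lambda^3}$, uniformly in $w\in\reals^n_+$. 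Since $\reals^n_+$ is convex, integrating the Hessian along segments gives $\norm{\nabla G(w)-\nabla G(w')}_2\le\beta\norm{w-w'}_2$, the claimed smoothness.

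The calculations are routine; the one point that matters is that the smoothness constant be \emph{uniform} over the whole orthant, and that is precisely where $M(w)\succeq\lambda I$ (which holds for every $w\ge 0$, with no upper bound on $\norm{w}$ needed) does the work, explaining the otherwise innocuous hypotheses. I expect the only mildly delicate bookkeeping to be differentiating $(M+tN)^{-1}$ twice and choosing Frobenius-versus-operator norm inequalities so the final bound comes out clean.
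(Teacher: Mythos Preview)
Your argument is correct and reaches the same smoothness constant $\beta=\frac{nL^6}{\sigma^4\lambda^3}$ as the paper, but the route differs. The paper computes the Hessian entrywise via directional derivatives, obtaining the closed form
\[
\nabla^2_w \hat G(w)=2\big(XF^+(w)X^\top\big)\circ\big(XF^+(w)X^\top X F^+(w)X^\top\big),
\]
with $F^+(w)=(X^\top D(w)X+\lambda\sigma^2 I)^{-1}$, and then invokes the Schur product theorem to conclude positive semidefiniteness; smoothness follows by bounding $\lambda_{\max}(\nabla^2 G)\le\trace(\nabla^2 G)$ and estimating each diagonal term. Your approach bypasses the entrywise Hessian entirely: convexity comes from the Fenchel-type identity $x^\top A^{-1}x=\sup_z(2x^\top z-z^\top A z)$ applied to the affine map $M(\cdot)$, and smoothness from the one-line second-derivative formula for $(M+tN)^{-1}$ together with the factorization $\phi''(0)=\tfrac{1}{n}\|XM^{-1}NM^{-1/2}\|_F^2$ and operator/Frobenius norm bounds. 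Your method is shorter and arguably more robust (it would adapt immediately to, e.g., $\trace(CA(w)^{-1}C^\top)$ for any affine $A(\cdot)\succeq\lambda I$), while the paper's computation has the benefit of exhibiting the explicit Hadamard-product structure of the Hessian, which is an interesting fact in its own right.
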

\begin{proof}
We will show that the Hessian of $G(w)$ is positive semi-definite (PSD) and that the maximum eigenvalue of the Hessian is bounded, which imply the convexity and smoothness of $G(w)$ .

For the sake of brevity, we will work with the function $\hat{G}(w) =  \trace\left(  X\left(X^\top D(w) X + \lambda\sigma^2 I\right)^{-1}X^\top \right)$ where $\frac{\sigma^2}{2n}\hat{G}(w)=G(w)$. Also, let us denote $F(w) = X^\top D(w) X + \lambda\sigma^2 I$ and $F^+(w) = \left(X^\top D(w) X + \lambda\sigma^2 I \right)^{-1}$ s.t. $F(w) F^+(w) = I$. First, we would like to calculate $\frac{\partial \hat{G}(w)}{\partial w_i}$, for which we will make use of directional derivatives:
\begin{eqnarray*}
D_v \hat{G}(w) & = &\lim_{h \to 0} \frac{\hat{G}(w+hv) - \hat{G}(w)}{h} \\
            & = & \trace \left( X \left(\lim_{h \to 0} \frac{F^{+}(w+hv) - F^{+}(w)}{h} \right)X^\top \right) \\ 
            & = & \trace \left( X \left(\lim_{h \to 0} F^{+}(w+hv) \cdot \frac{F(w) - F(w+hv)}{h} \cdot  F^{+}(w) \right)X^\top \right) \\ 
            & \overset{\textup{def. of } F}{=} & -\trace \left( X \left(\lim_{h \to 0} F^{+}(w+hv) \cdot \frac{\not{h} X^\top D(v)X}{\not{h}} \cdot  F^{+}(w) \right)X^\top \right) \\ 
            & = & -\trace \left( X  F^{+}(w)   X^\top D(v)X  F^{+}(w) X^\top \right) \\
\end{eqnarray*}
In order to get $\frac{\partial \hat{G}(w)}{\partial w_i}$, we should choose as direction $v_i := (0,\dots, 0,1,0,\dots,0)^\top$ where $1$ is on the $i$-th position. Since $X^\top D(v_i) X = x_ix_i^\top$, we have that:
\begin{eqnarray*}
\frac{\partial \hat{G}(w)}{\partial w_i} = D_{v_i} \hat{G}(w) & = &  -\trace \left( X  F^{+}(w)   x_ix_i^\top F^{+}(w) X^\top \right) \\
&  \overset{\textup{cyclic prop } \trace}{=} &  -x_i^\top F^{+}(w) X^\top X  F^{+}(w) x_i \\
\end{eqnarray*}
We will proceed similarly to get $\frac{\partial^2 \hat{G}(w)}{\partial w_j \partial w_i}$.
\begin{eqnarray*}
D_v \frac{\partial \hat{G}(w)}{\partial w_i} &=& -x_i^\top \left( \lim_{h \to 0} \frac{F^+(w+hv) X^\top X  F^+(w+hv) - F^+(w) X^\top X  F^+(w) }{h} \right)x_i \\
             &=& -x_i^\top \left( \lim_{h \to 0} F^+(w+hv) \cdot \frac{ X^\top X  F^+(w+hv) F(w) - F(w+hv) F^+(w) X^\top X  }{h} \cdot  F^+(w)\right)x_i \\
              &=& x_i^\top \left( \lim_{h \to 0} F^+(w+hv) \cdot \frac{  F(w+hv) F^+(w) X^\top X - X^\top X  F^+(w+hv) F(w)  }{h} \cdot  F^+(w)\right)x_i \\
\end{eqnarray*}
Now, since, 
\begin{eqnarray*}
F(w+hv) F^+(w)&=& (F(w) + hX^\top D(v)X)  F^+(w)=I + hX^\top D(v)X F^+(w) \\ 
F^+(w+hv) F(w)&=& F^{+}(w+hv) (F(w+hv) - hX^\top D(v)X)  =I - hF^+(w+hv)X^\top D(v)X 
\end{eqnarray*}
we have 
\begin{eqnarray*}
D_v \frac{\partial \hat{G}(w)}{\partial w_i} &=& x_i^\top  F^+(w) \left(X^\top D(v)X F^+(w)X^\top X  + X^\top X F^+(w)X^\top D(v)X  \right)  F^+(w) x_i \\
&=& 2  x_i^\top  F^+(w)X^\top D(v)X F^+(w)X^\top X F^+(w) x_i
\end{eqnarray*}
Choosing $v_j$ as our directional derivative, we have:
\begin{eqnarray*}
\frac{\partial^2 \hat{G}(w)}{\partial w_j \partial w_i} = D_{v_j} \frac{\partial \hat{G}(w)}{\partial w_i} & = &  2  \left(x_i^\top  F^+(w)x_j\right)\left(x_j^\top F^+(w)X^\top X F^+(w) x_i\right)\\
 & = &  2  \left(x_j^\top  F^+(w)x_i\right)\left(x_j^\top F^+(w)X^\top X F^+(w) x_i\right)
\end{eqnarray*}
from which we can see that we can write the Hessian of $\hat{G}(w)$ in matrix form as:
\begin{equation*}
    \nabla^2_w \hat{G}(w) = 2  \left(X  F^+(w)X^\top\right) \circ \left(X F^+(w)X^\top X F^+(w) X^\top\right)
\end{equation*}
where $\circ$ denotes the Hadamard product. Since $F^+(w)$ is PSD it immediately follows that $X  F^+(w)X^\top$ and $X F^+(w)X^\top X F^+(w) X^\top$ are PSD. Since the Hadamard product of two PSD matrices is PSD due to the \emph{Schur product theorem}, it follows that the Hessian $\nabla^2_w \hat{G}(w) $ is PSD and thus $G(w)$ is convex.

As for smoothness, we need the largest eigenvalue of the Hessian to be bounded:
\begin{eqnarray*}
    \lambda_{\max} (\nabla^2_w \hat{G}(w)) &\leq& \trace  (\nabla^2_w \hat{G}(w))= 2\sumin \left(X  F^+(w)X^\top\right)_{ii} \left(X F^+(w)X^\top X F^+(w) X^\top\right)_{ii} \\
    &=& 2\sumin \left(x_i^\top  F^+(w)x_i\right) \left(x_i^\top  F^+(w)X^\top X F^+(w) x_i \right) \\
    &=&2\sumin \left(x_i^\top  F^+(w)x_i\right) \normp{X F^+(w) x_i}{2}^2 \\ 
    &\overset{\textup{Rayleigh q.}}{\leq} & 2\sumin   \lambda_{\max} (F^+(w)) \normp{x_i}{2}^2 \normp{X F^+(w) x_i}{2}^2 \\
    &\leq& 2\sumin   \lambda_{\max} (F^+(w)) \normp{x_i}{2}^2 \normp{X}{2}^2 \normp{F^+(w)}{2}^2 \normp{x_i}{2}^2 \\ 
    &= & 2 \lambda_{\max}^3 (F^+(w)) \normp{X}{2}^2 \sumin   \normp{x_i}{2}^4 \leq \frac{2}{\lambda^3\sigma^6}  \normp{X}{2}^2 \sumin   \normp{x_i}{2}^4  \\ 
    &\leq & \frac{2}{\lambda^3\sigma^6}  \normp{X}{F}^2 nL^4  \leq \frac{2n^2L^6}{\lambda^3\sigma^6}   \\ 
\end{eqnarray*}
Thus $G$ is $\frac{nL^6}{\lambda^3\sigma^4}$-smooth.

\end{proof}

\section{Connections to Influence Functions.} \label{app:sec-influence-fns}
We show that our approach is also related to incremental subset selection via influence functions. Let us consider the influence of the $k$-th point on the outer objective. Suppose that we have already selected the subset $S$ and found the corresponding optimal weights $w_S^*$. Then, the influence of point $k$ on the outer objective is
\begin{equation*}
    \begin{aligned}
    \quad & \mathcal{I}(k)   := - \frac {\partial \sumin  \ell_i(\theta^*) }{\partial \eps} \bigg|_{\eps =0}\\
    \textrm{s.t.} \quad & \theta^* = \argmin_\theta \sumin  w^*_{S,i}\,  \ell_i(\theta) +  \bm{\eps\ell_k(\theta)}. \\
    \end{aligned}\label{eq:influence-func-v2}
\end{equation*}

\begin{proposition}\label{prop:infl-fns} Under twice differentiability and strict convexity of the inner loss,  $\argmax_k \mathcal{I}(k) $ corresponds to the selection rule in Equation \eqref{eq:explicit-selection-rule}.
\end{proposition}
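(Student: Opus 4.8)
The plan is to compute $\mathcal{I}(k)$ in closed form using the chain rule together with the implicit-differentiation formula already recalled in the excerpt, and then to observe that the result is literally the bilinear form maximized in \eqref{eq:explicit-selection-rule}; taking $\argmax_k$ of two identical expressions then gives the claim.

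First I would set $\theta^*(\eps) := \argmin_\theta \sum_i w^*_{S,i}\ell_i(\theta) + \eps\ell_k(\theta)$, so that $\theta^*(0) = \theta^*(w^*_S)$, and note that strict convexity of the inner loss makes this minimizer unique and the Hessian $H := \frac{\partial^2}{\partial\theta\partial\theta^\top}\big(\sum_i w^*_{S,i}\ell_i\big)\big(\theta^*(0)\big)$ positive definite, hence invertible and symmetric; moreover $H = \frac{\partial^2 f(\theta^*(0), w^*_S)}{\partial\theta\partial\theta^\top}$ since $f(\theta,w) = \sum_i w_i\ell_i(\theta)$. Applying the chain rule to the definition of $\mathcal{I}(k)$ gives
\[
\mathcal{I}(k) = -\,\nabla_\theta\Big(\textstyle\sum_i \ell_i\Big)\big(\theta^*(0)\big)^\top\,\frac{\partial \theta^*}{\partial \eps}\Big|_{\eps = 0}.
\]

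Next I would substitute the empirical influence function quoted in the excerpt (from \cite{cook1982residuals}, as used in \cite{koh2017understanding}), namely $\frac{\partial \theta^*}{\partial\eps}\big|_{\eps=0} = -H^{-1}\nabla_\theta\ell_k(\theta^*(0))$, so that the two minus signs cancel and
\[
\mathcal{I}(k) = \nabla_\theta\Big(\textstyle\sum_i \ell_i\Big)(\theta^*)^\top\, H^{-1}\,\nabla_\theta\ell_k(\theta^*),
\]
all quantities evaluated at $\theta^*(0) = \theta^*(w^*_S)$. Using the symmetry of $H^{-1}$, this equals $\nabla_\theta\ell_k(\theta^*)^\top H^{-1}\nabla_\theta g(\theta^*)$ with the choice $g(\theta) = \sum_i\ell_i(\theta)$, which is exactly the $k$-dependent quantity inside the $\argmax$ in \eqref{eq:explicit-selection-rule}. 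Since the two expressions coincide identically — not merely up to a $k$-independent additive constant — it follows that $\argmax_k \mathcal{I}(k) = k^*$.

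I do not expect a genuine obstacle: the argument is a short calculation. The only points needing care are bookkeeping of the iterate at which each gradient and Hessian is evaluated, tracking the two cancelling minus signs, and invoking strict convexity so that $H$ is invertible and symmetric — the symmetry being precisely what licenses transposing the bilinear form to match the form stated in \eqref{eq:explicit-selection-rule}.
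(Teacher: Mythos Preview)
Your proposal is correct and follows essentially the same approach as the paper: apply the chain rule to $\mathcal{I}(k)$, substitute the quoted empirical influence formula $\frac{\partial\theta^*}{\partial\eps}\big|_{\eps=0}=-H^{-1}\nabla_\theta\ell_k(\theta^*)$, and identify the result with the bilinear form in \eqref{eq:explicit-selection-rule}. If anything, you are slightly more explicit than the paper in justifying the invertibility of $H$ via strict convexity and in invoking the symmetry of $H^{-1}$ to transpose the final expression into the stated form.
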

\begin{proof}
Following \cite{koh2017understanding} and using the result of \cite{cook1982residuals}, the empirical influence function at $k$ is
\begin{equation}
    \frac{\partial \theta^* }{ \partial \eps} \bigg|_{\eps =0} \!=\! -\left(\frac{\partial^2 \sumin w^*_{S,i}  \ell_i(\theta^*)}{\partial \theta \partial \theta^\top} \right)^{-1} \nabla_\theta \ell_k(\theta^*).
    \label{eq:cook-app}
\end{equation}

Now, we use the chain rule for $\mathcal{I}(k)$:
\begin{align*}
    \mathcal{I}(k)   &= - \frac {\partial \sumin  \ell_i(\theta^*) }{\partial \eps} \bigg|_{\eps =0} \\
    &=- \left(\nabla_\theta \sumin  \ell_i(\theta^*) \right)^\top \frac{\partial \theta^*}{\partial \eps} \bigg|_{\eps =0} \\
    &\stackrel{\textup{Eq. } \ref{eq:cook-app}}{=}  \nabla_\theta \ell_k(\theta^*)^\top \left(\frac{\partial^2 \sumin w^*_{S,i}  \ell_i(\theta^*)}{\partial \theta \partial \theta^\top} \right)^{-1}\nabla_\theta \sumin  \ell_i(\theta^*).
\end{align*}
\end{proof}

\section{Merge-reduce Streaming Coreset Construction} \label{sec:appendix-merge-reduce}

\noindent\begin{minipage}{\textwidth}
         \centering
      \begin{algorithm}[H]
       \caption{streaming\_coreset}
       \label{alg:streaming-coreset}
    \begin{algorithmic}
       \STATE {\bfseries Input:} stream $S$, number of slots $s$, $\beta$
        \STATE $\textup{buffer} = [ \,\, ]$
        \FOR{$\mathcal{X}_t$ in stream $S$} 
            \STATE $\mathcal{C}_t = \textup{construct\_coreset}(\mathcal{X}_t)$
            \STATE $\textup{buffer.append}((\mathcal{C}_t, \beta))$
            \IF {$\textup{buffer.size} > s$}
                \STATE $k = \textup{select\_index(buffer)}$
                \STATE $\mathcal{C}'=\textup{construct\_coreset}((\mathcal{C}_k, \beta_k),(\mathcal{C}_{k+1}, \beta_{k+1}))$
                \STATE $\beta' = \beta_k + \beta_{k+1} $
                 \STATE \textup{delete buffer}$[k+1]$
                \STATE \textup{buffer}$[k] = (\mathcal{C}', \beta')$
            \ENDIF
        \ENDFOR
    \end{algorithmic}
    \end{algorithm}
\end{minipage}

\noindent\begin{minipage}{\textwidth}
         \centering
         \vspace{1cm}
      \begin{algorithm}[H]
       \caption{select\_index}
       \label{alg:select-index}
    \begin{algorithmic}
       \STATE {\bfseries Input:}  buffer $=[(C_1, \beta_1), \dots , (C_{s+1}, \beta_{s+1})]$ containing an extra slot
       \IF{$s==1$ or $\beta_{s-1} >\beta_s$}
            \STATE {\bfseries return $s$}
        \ELSE
            \STATE {$k = \arg \min_{i \in [1,\dots,s]}  \left( \beta_{i} == \beta_{i+1}\right)$ }
            \STATE {\bfseries return $k$}
        \ENDIF
    \end{algorithmic}
    \end{algorithm}
\end{minipage}

\section{Continual Learning and Streaming Experiments} \label{sec:continual-learning-streaming-experiments}

\begin{table*}[h]
    \centering
    \caption {Continual learning  with replay memory size of 100 for versions of MNIST and 200 for CIFAR-10. We report the average test accuracy over the tasks with one standard deviation over 5 runs with different random seeds. Our coreset construction performs among the best.   \label{table:cl-streaming-res-full}}
\begin{tabular}{@{}cccc@{}}
\toprule
 \textbf{Method}                       & \textbf{PermMNIST}                   & \textbf{SplitMNIST}                  & \textbf{SplitCIFAR-10} \\ \midrule Training w/o replay                                               & 73.82 $\pm$ 0.49                     & 19.90 $\pm$ 0.03                     & 19.95 $\pm$ 0.02           \\
  Uniform sampling, train at the end                                                 & 34.31 $\pm$ 1.37  & 86.09 $\pm$ 1.84  & 28.31 $\pm$ 0.94 \\
  Per task coreset, train at the end                                       & 46.22 $\pm$ 0.45  & 92.53 $\pm$ 0.53  & 31.01 $\pm$ 1.16 \\
  Uniform sampling                                    & 78.46 $\pm$ 0.40  & 92.80 $\pm$ 0.79  & 36.20 $\pm$ 3.19  \\
  $k$-means of features    & 78.34 $\pm$ 0.49  & 93.40 $\pm$ 0.56  & 33.41 $\pm$ 2.48  \\
  $k$-means of embeddings & 78.84 $\pm$ 0.82  & 93.96 $\pm$ 0.48  & 36.91 $\pm$ 2.42    \\
  $k$-means of grads      & 76.71 $\pm$ 0.68  & 87.26 $\pm$ 4.08  & 27.92 $\pm$ 3.66   \\
  $k$-center of features  & 77.32 $\pm$ 0.47  & 93.16 $\pm$ 0.96  & 32.77 $\pm$ 1.62  \\
  $k$-center of embeddings & 78.57 $\pm$ 0.58  & 93.84 $\pm$ 0.78  & 36.91 $\pm$ 2.42   \\
  $k$-center of grads      & 77.57 $\pm$ 1.12  & 88.76 $\pm$ 1.36  &27.01 $\pm$ 0.96  \\
  Gradient matching  & 78.00 $\pm$ 0.57  & 92.36 $\pm$ 1.17  & 35.69 $\pm$ 2.61   \\
  Max entropy samples       & 77.13 $\pm$ 0.63  & 91.30 $\pm$ 2.77  & 27.00 $\pm$ 1.83   \\
  Hardest samples            & 76.79 $\pm$ 0.55  & 89.62 $\pm$ 1.23  & 28.10 $\pm$ 1.79  \\
  FRCL's selection & 78.01 $\pm$ 0.44  & 91.96 $\pm$ 1.75  &  34.50 $\pm$ 1.29   \\
  iCaRL's selection  & 79.68 $\pm$ 0.41  & 93.99 $\pm$ 0.39  & 34.52 $\pm$ 1.62  \\
  Coreset                                    & 79.26 $\pm$ 0.43 & 95.87 $\pm$ 0.20  & 37.60 $\pm$ 2.41  \\
 \bottomrule
\end{tabular}
\vspace{-2mm}
    \end{table*}

We conduct an extensive study of several selection methods for the replay memory in the  continual learning setup. We compare the following methods:
\begin{itemize}[noitemsep,topsep=0pt]
    \item Training w/o replay: train after each task without replay memory. Demonstrates how catastrophic forgetting occurs.
    \item Uniform sampling / per task coreset: the network is only trained on the points in the replay memory with the different selection methods. 
    \item Training per task with uniform sampling / coreset: the network is trained after each task and regularized with the loss on the samples in the replay memory chosen as a uniform sample / coreset per task.
    \item $k$-means / $k$-center in feature / embedding / gradient space: the per-task selection retains points in the replay memory that are generated by the $k$-means++ \cite{arthur2006k} / greedy $k$-center algorithm, where the clustering is done either in the original feature space, in the last layer embedding of the neural network, or in the space of the gradient with respect to the last layer (after training on the respective task). The points that are the cluster centers in the different spaces are the ones chosen to be saved in the memory. Note that the $k$-center summarization in the last layer embedding space is the  coreset method proposed for active learning by \cite{sener2018active}.
    \item Hardest / max-entropy samples per task: the saved points have the highest loss after training on each task / have the highest uncertainty (as measured by the entropy of the prediction). Such selection strategies are used among others by \cite{coleman2020selection} and \cite{aljundi2019task}.
    \item Training per task with FRCL's / iCaRL's selection: the points per task are selected by FRCL's inducing point selection \cite{titsias2020functional}, where the kernel is chosen as the linear kernel over the last layer embeddings / iCaRL's selection (Algorithm 4 in \cite{rebuffi2017icarl}) performed in the normalized embedding space. 
     \item Gradient matching per task: same as iCaRL's selection, but in the space of the gradient with respect to the last layer and jointly over all classes. This is a variant of Hilbert coreset \cite{campbell2019automated} with equal weights, where the Hilbert space norm is chosen to be the squared 2-norm difference of loss gradients with respect to the last layer.

\end{itemize}
We report the results in Table \ref{table:cl-streaming-res-full}. We note that while several methods outperform uniform sampling on some datasets, only our method is consistently outperforming it on all datasets. We also conduct a study on the effect of the replay memory size shown in Table \ref{table:buffer-size-study}.

  \begin{table}[h]
    \centering
    \caption {Replay memory size study on SplitMNIST. Our method offers bigger improvements with smaller memory sizes. }
    \label{table:buffer-size-study}
\begin{tabular}{@{}cccc@{}}
\toprule
\textbf{Method / Memory size} & \textbf{50}      & \textbf{100}     & \textbf{200}     \\ \midrule
CL uniform sampling           & 85.23 $\pm$ 1.84  & 92.80 $\pm$ 0.79  & 95.08 $\pm$ 0.30 \\
CL coreset          & 91.79 $\pm$ 0.71  &  95.87 $\pm$ 0.20  &  97.06 $\pm$ 0.30 \\
Streaming reservoir sampling            & 83.90 $\pm$ 3.18   & 90.72  $\pm$ 0.97   & 94.12 $\pm$ 0.61  \\
Streaming coreset             &  85.68 $\pm$ 2.05  & 92.59 $\pm$ 1.20  & 95.64  $\pm$ 0.68   \\ \bottomrule
\end{tabular}
\end{table}

\paragraph{RBF kernel as proxy.} In our experiments we used the Neural Tangent Kernel as proxy. It turns out that on the datasets derived from MNIST simpler kernels such as  RBF are also good proxy choices. To illustrate this, we repeat the continual learning and streaming experiments and report the results in Table \ref{table:rbf}. For the RBF kernel $k(x,y) = \exp(-\gamma\normp{x-y}{2}^2)$ we set $\gamma=5 \cdot 10^{-4}$. While the RBF kernel is a good proxy for these datasets, it fails on harder datasets such as CIFAR-10.

\begin{table*}[h]
    \centering
    \caption {RBF vs CNTK proxies.}  \label{table:rbf}
\begin{tabular}{@{}cccc@{}}
        \toprule
        &  \textbf{Method}                       & \textbf{PermMNIST}                   & \textbf{SplitMNIST}                  \\ \midrule
        \multirow{2}{*}{\STAB{\rotatebox[origin=c]{90}{\large{CL}}}} 
     & Coreset CNTK                                    & 79.26 $\pm$ 0.43  & 95.87 $\pm$ 0.20   \\
        &  Coreset RBF                                   & 79.89 $\pm$ 0.87  & 96.18 $\pm$ 0.33  \\  
        \midrule
        \multirow{2}{*}{\STAB{\rotatebox[origin=c]{90}{\large{VCL}}}} 
        &  Coreset CNTK                                   & 86.18 $\pm$ 0.21 & 84.66 $\pm$ 0.63  \\
         & Coreset RBF & 86.13 $\pm$ 0.31 & 82.37 $\pm$ 1.40   \\
         \midrule
        \multirow{2}{*}{\STAB{\rotatebox[origin=c]{90}{\large{Str.}}}} 
        & Coreset CNTK                                    & 74.44 $\pm$ 0.52  & 92.59 $\pm$ 1.20  \\
        & Coreset RBF                                   & 75.93  $\pm$ 0.59 & 92.61  $\pm$ 0.83  \\
       \bottomrule
        \end{tabular}
        \vspace{-4mm}
\end{table*}

\paragraph{Inspecting the coreset.} We provide further insights for the large gains obtained by our method by inspecting the resulting coreset when summarizing MNIST in Figure \ref{fig:mnist-selection}. 
We can notice that in each step, the method selects the sample that has the potential to increase the accuracy by the largest amount: the first 10 samples (first row) are picked from different classes, after which the method  diversifies within the classes. Even though the dataset is balanced, our method chooses more samples from more diverse classes, e.g., it picks twice as many 8s than 1s.

\begin{figure}[h]
\centering
  \includegraphics[width=\linewidth]{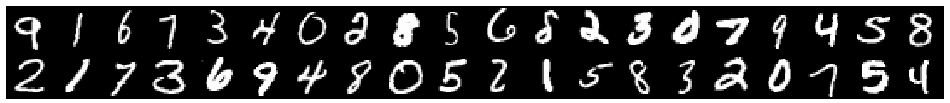}
%   \vspace{-1mm}
  \caption{Coreset of size 40 of MNIST produced by our method: samples are diverse within class, and harder classes are represented with more samples.}
  \label{fig:mnist-selection}
%   \vspace{-2mm}
\end{figure}

\section{Speedups, Data Preprocessing, Architectures and Hyperparameters} \label{sec:appendix-hyperparams}

\looseness -1 \paragraph{Speedups.} We now discuss several speedups. Since the number of parameters in the inner optimization problem in Eq. \ref{eq:bilevel-coreset-hilbert-v2} is small, we can use quasi-Newton methods such as L-BFGS \cite{liu1989limited} for faster convergence. For the outer level optimization we use Adam \cite{DBLP:journals/corr/KingmaB14}. After a step on the outer level, we can reuse the result from the last inner iteration to restart the optimization of the inner variable.  For additional speedup in the streaming and continual learning setting, we use binary weights and consequently do not perform weight optimization.

\paragraph{Bilevel optimization.} When the inner optimization problem cannot be solved in closed form, we solve it using L-BFGS using step size 0.25, 200 iterations and $10^{-5}$ tolerance on first order optimality.  For the inner optimization problem  we report the best result obtained for $\lambda \in \{10^{-6}, 10^{-5}, 10^{-4}, 10^{-3}, 10^{-2}\}$. The number of conjugate gradient steps per approximate inverse Hessian-vector product calculation is set to 50.

\looseness -1 For the outer level we use Adam, with learning rate 0.05 and 10 iterations for the CNN on MNIST experiment, and with learning rate 0.02 and 200 iterations for the KRR with CNTK on CIFAR-10 experiment --- here we can afford more outer iterations as the inner problem can be solved in closed form. When applying the gradient on weights, we ensure the positivity of weights by projection.
In continual learning and streaming we use unweighted samples, so we do not perform outer iterations. When applying our incremental selection of points, we select the best candidate out of a random sample of 200.

\paragraph{Data preprocessing.} For all datasets, we standardize the pixel values per channel for each dataset separately. For the CIFAR-10 streaming experiment  we perform the standard data augmentations of random cropping and horizontal flipping.

\paragraph{Training details.} In the continual learning experiments, we train our networks for 400 epochs using Adam with step size $5\cdot10^{-4}$ after each task. The loss at each step consists of the loss on a minibatch of size 256 of the current tasks and loss on the replay memory scaled by $\beta$. For streaming, we train our networks for 40 gradient descent steps using Adam with step size $5\cdot10^{-4}$ after each batch. For \cite{NIPS2019_9354}, we use a streaming batch size of 10 for better performance, as indicated in Section 2.4 of the supplementary materials of \cite{NIPS2019_9354}. 

\paragraph{Hyperparameter selection in continual learning and streaming.} The replay memory regularization strength $\beta$ was tuned separately for each method from the set $\{0.01, 0.1, 1, 10, 100, 1000\}$ and the best result on the test set was reported. 

\paragraph{Computational resources.} The neural networks are trained on a single GeForce GTX 1080 Ti. (C)NTK kernels are calculated on the same GPU. The coreset generation is performed with a single CPU thread.

\end{document}